\documentclass{article} 
\usepackage{iclr2025_conference,times}


\usepackage{amsmath,amsfonts,bm}









\def\eqref#1{equation~\ref{#1}}









\def\1{\bm{1}}










\DeclareMathAlphabet{\mathsfit}{\encodingdefault}{\sfdefault}{m}{sl}
\SetMathAlphabet{\mathsfit}{bold}{\encodingdefault}{\sfdefault}{bx}{n}













\usepackage{hyperref}
\usepackage{url}

\usepackage{booktabs}       
\usepackage{amsfonts}       
\usepackage{nicefrac}       
\usepackage{microtype}      
\usepackage{xcolor}         
\usepackage{graphicx}
\usepackage{wrapfig}
\usepackage{mathtools}
\usepackage{amsthm}
\usepackage{enumitem}
\usepackage{subcaption}
\usepackage{natbib}
\usepackage[capitalize,noabbrev]{cleveref}
\usepackage{algorithm}
\usepackage{algpseudocode}
\usepackage{multirow}
\usepackage{soul}    
\usepackage{todonotes}

\theoremstyle{plain}
\newtheorem{theorem}{Theorem}[section]

\newtheorem{corollary}[theorem]{Corollary}
\theoremstyle{definition}

\providecommand{\brown}{\textcolor{brown}}

\title{Improving Neural Optimal Transport via Displacement Interpolation}


\author{
Jaemoo Choi\\
Georgia Institute of Technology\\
\texttt{jchoi843@gatech.edu} \\
\And
Yongxin Chen\\
Georgia Institute of Technology\\
\texttt{ychen3148@gatech.edu} \\
\And
Jaewoong Choi\thanks{Corresponding Author} \\
Sungkyunkwan University \\
\texttt{jaewoongchoi@skku.edu} \\
}

%

\iclrfinalcopy 
\begin{document}

\maketitle

\begin{abstract}
Optimal Transport (OT) theory investigates the cost-minimizing transport map that moves a source distribution to a target distribution. Recently, several approaches have emerged for learning the optimal transport map for a given cost function using neural networks. We refer to these approaches as the OT Map. OT Map provides a powerful tool for diverse machine learning tasks, such as generative modeling and unpaired image-to-image translation. However, existing methods that utilize max-min optimization often experience training instability and sensitivity to hyperparameters. In this paper, we propose a novel method to improve stability and achieve a better approximation of the OT Map by exploiting displacement interpolation, dubbed \textit{Displacement Interpolation Optimal Transport Model (DIOTM)}. We derive the dual formulation of displacement interpolation at specific time $t$ and prove how these dual problems are related across time. This result allows us to utilize the entire trajectory of displacement interpolation in learning the OT Map. Our method improves the training stability and achieves superior results in estimating optimal transport maps. We demonstrate that DIOTM outperforms existing OT-based models on image-to-image translation tasks.
\end{abstract}

\section{Introduction}
Optimal Transport (OT) problem \citep{villani, ComputationalOT} explores the problem of finding the cost-optimal transport map that transforms one probability distribution (\textit{source distribution}) into another (\textit{target distribution}). 
Recently, there has been a growing interest in directly learning the optimal transport map using neural networks. 
Throughout this paper, we call these approaches as the \textit{\textbf{OT Map}}.
OT Map has been widely applied across various machine learning tasks by appropriately defining the source and target distributions, such as generative modeling \citep{otm, uotm, sjko, rectifiedFlow, lipman2022flow}, image-to-image translation \citep{not, fanscalable}, and domain adaptation \citep{da-ot}. OT Map is particularly well-suited for unsupervised (unpaired) distribution transport problems, as it enables the transport of one distribution into another using only a predefined cost function, without requiring data pairs.

Despite their potential, existing OT Map methods often encounter significant challenges in training stability. 
In particular, the OT models utilizing max-min objectives exhibit unstable training dynamics and sensitivity to hyperparameters \citep{icnn-ot, fanscalable, otm, da-ot}.
These challenges limit their applicability to high-dimensional data.
To address these instability issues, various approaches have been explored, such as introducing additional regularization terms in the learning objective \citep{otm, r1_reg} and generalizing the standard OT problem to the unbalanced optimal transport problem \citep{uotm, uotmsd}. 

In this paper, we propose a novel approach for learning the optimal transport map based on displacement interpolation. We refer to our model as the \textbf{\textit{Displacement Interpolation Optimal Transport Model (DIOTM)}}. We identify a fundamental connection between the (static) optimal transport map and displacement interpolation. Motivated by this relationship, we derive a max-min formulation of displacement interpolation, involving the optimal transport map and the time-dependent Kantorovich potential. 
Our experimental results demonstrate that DIOTM achieves more stable convergence and superior accuracy in approximating OT maps compared to existing methods. 
In particular, DIOTM achieves competitive FID scores in image-to-image translation tasks, such as 5.27 for Male$\rightarrow$Female ($64 \times 64$), 7.40 for Male$\rightarrow$Female ($128 \times 128$), and 10.72 for Wild$\rightarrow$Cat ($64 \times 64$), comparable to the state-of-the-art results. Our contributions can be summarized as follows:
\begin{itemize}[topsep=-1pt, itemsep=0pt]
    \item We propose a method to learn the optimal transport map based on displacement interpolation.
    \item We derive the dual formulation of displacement interpolation and utilize this to formulate a max-min optimization problem for the transport map and potential function.
    \item We introduce a novel regularizer, called the HJB regularizer, derived from the optimality condition of the potential function.
    \item Our model significantly improves the training stability and accuracy of existing OT Map models that leverage min-max objectives.
\end{itemize}

\paragraph{Notations and Assumptions}
Let $\mathcal{X}$ be a connected bounded convex open subspace of $\mathbb{R}^d$, and let $\mu$ and $\nu$ be absolutely continuous probability distributions with respect to Lebesgue measure.
We regard $\mu$ and $\nu$ as the source and target distributions.
For a measurable map $T$, $T_{\#}\mu$ represents the pushforward distribution of $\mu$.
$\Pi(\mu, \nu)$ denote the set of joint probability distributions on $\mathcal{X}\times\mathcal{Y}$ whose marginals are $\mu$ and $\nu$, respectively.
$c(x,y)$ refers to the transport cost function defined on $\mathcal{X}\times\mathcal{Y}$.
Throughout this paper, we consider $\mathcal{X}=\mathcal{Y} \subset \mathbb{R}^d$ with the quadratic cost, $c(x,y)=\alpha \lVert x-y \rVert^2$, where $d$ indicates the dimension of data.
Here, $\alpha$ is a given positive constant.
Moreover, we denote $W_2(\cdot, \cdot)$ as the 2-Wasserstein distance of two distributions.

\section{Background} \label{sec:background}
In this section, we provide a brief overview of Optimal Transport theory. These results, especially the dual formulation and displacement interpolation, will play a crucial role in our proposed method.

\paragraph{Optimal Transport} 
The Optimal Transport (OT) problem investigates the optimal way to transport the source distribution $\mu$ to the target distribution $\nu$ \citep{villani}. The optimality of the transport plan is defined as the minimization of a given cost function. Initially, \citet{monge1781memoire} formulated the OT problem with a \textit{deterministic transport map $T$} where $T_{\#}\mu = \nu$. However, the Monge OT problem is non-convex and the optimal transport map $T^{\star}$ may not exist depending on $\mu$ and $\nu$. To overcome this problem, \citet{Kantorovich1948} introduced the following convex formulation:
\begin{equation} \label{eq:Kantorovich}
        C(\mu,\nu):=\inf_{\pi \in \Pi(\mu, \nu)} \left[ \int_{\mathcal{X}\times \mathcal{X}} c(x,y) d\pi(x,y) \right],
    \end{equation}
where the minimization is conducted over the set of joint probability distribution $\pi \in \Pi(\mu, \nu)$. We refer to this $\pi$ as the \textit{transport plan} or \textit{coupling} between $\mu$ and $\nu$. When the optimal transport map $T^{\star}$ from the Monge OT exists, the optimal coupling $\pi^{\star}$ satisfies $\pi^{\star} = (Id \times T^{\star})_{\#} \mu$. For a general cost function $c(\cdot, \cdot)$ that is lower semicontinuous and lower bounded, the Kantorovich OT problem (Eq. \ref{eq:Kantorovich}) can be reformulated as follows (\citet{villani}, Chapter 5):
\begin{equation} \label{eq:kantorovich-semi-dual}
    C(\mu, \nu) = \sup_{V\in L^1(\nu)} \left[ \int_\mathcal{X} V^c(x)d\mu(x) + \int_\mathcal{X} V(y) d\nu (y) \right],
\end{equation}
where the potential function $V\in L^1(\nu)$ is an integrable function with respect to $\nu$ and the $c$-transform of $V$ is defined as 
\begin{equation} \label{eq:def_c_transform}
  V^c(x)=\underset{y\in \mathcal{Y}}{\inf}\left(c(x,y) - V(y)\right).
\end{equation}
 This formulation (Eq. \ref{eq:kantorovich-semi-dual}) is called the \textit{semi-dual formulation of OT}.

\citet{otm} and \citet{fanscalable} proposed a method for learning the optimal transport map $T^{\star}$ by leveraging this semi-dual formulation (Eq. \ref{eq:kantorovich-semi-dual}) of OT and applied this $T^{\star}$ for generative modeling. In generative modeling, the source distribution $\mu$ and the target distribution $\nu$ correspond to the Gaussian prior and the target data distribution, respectively.
Specifically, these models parametrize the potential $V=V_\phi$ in Eq. \ref{eq:kantorovich-semi-dual} and the transport map $T_\theta:\mathcal{X}\rightarrow\mathcal{Y}$ as follows:
\begin{equation}\label{eq:def_T}
    T_{\theta}: x \mapsto \arg\min_{y \in \mathcal{X}} \left[c(x, y) - V_{\phi}\left( y \right)\right]
    \quad \Leftrightarrow \quad V_{\phi}^c(x)=c\left(x,T_{\theta}(x) \right) - V_{\phi}\left(T_{\theta}(x)\right).
\end{equation}
Note that the parametrization of $T_{\theta}$ on the left-hand side is equivalent to the representation of $V_{\phi}^{c}$ on the right-hand side, by the definition of the $c$-transform (Eq. \ref{eq:def_c_transform}). Based on this, we arrive at the following optimization problem:
\begin{equation} \label{eq:otm}
    \mathcal{L}_{V_{\phi}, T_{\theta}} = \sup_{V_{\phi}} \left[ \int_{\mathcal{X}} \inf_{T_{\theta}} \left[ c\left(x,T_\theta(x)\right)-V_\phi \left( T_\theta(x) \right) \right] d\mu(x) + \int_{\mathcal{X}} V_\phi(y)  d\nu(y) \right].
\end{equation}
Intuitively, $T_\theta$ and $V_\phi$ serve similar roles as the generator and the discriminator of a GAN \citep{gan}. A key difference is that $T_\theta$ is trained to minimize the cost $c\left(x, T_{\theta}(x)\right)$, since $T_\theta$ learns the optimal transport map. For convenience, we denote the optimization problem $\mathcal{L}_{V_{\phi}, T_{\theta}}$ (Eq. \ref{eq:otm}) as the OT-based generative model (OTM) \citep{fanscalable}.

\paragraph{Dynamic Optimal Transport and Displacement Interpolation}
In this paragraph, we provide a \textbf{close connection between the \textit{dynamic optimal transport problem} and \textit{displacement interpolation}}. While the (static) optimal transport (Eq. (\ref{eq:Kantorovich})) focuses solely on how each $x \sim \mu$ is transported to $y \sim \nu$, the dynamic optimal transport problem tracks the continuous evolution of $\mu$ to $\nu$. Formally, the dynamic formulation of the Kantorovich OT problem (Eq. \ref{eq:Kantorovich}) for the quadratic cost $c(x,y)= \alpha \lVert x-y \rVert^2$ can be expressed as follows: 
\begin{equation} \label{eq:Benamou-Brenier}
    \inf_{v:[0,1]\times \mathcal{X} \rightarrow \mathcal{X}} \left[ \int^1_0 \int_{\mathcal{X}} \alpha \lVert v_t(x) \rVert^2 \rho_t (x) dx dt ; \ \ \frac{\partial \rho_t}{\partial t} + \nabla \cdot (v_t\rho_t) = 0, \ \rho_0 = \mu, \ \rho_1 = \nu \right].
\end{equation}
This dynamic formulation of OT (Eq. \ref{eq:Benamou-Brenier}) is called the \textit{Benamou-Brenier formulation \citep{benamou}}. Note that the dynamic transport plan $\{ \rho_{t} \}_{0 \leq t \leq 1}$ evolves from $\mu$ to $\nu$, and this evolution is governed by the ODE $dx/dt = v_t(x)$ through the continuity equation. 

Interestingly, the optimal solution of this dynamic OT problem has a simple form. Along each ODE trajectory $\{ x(t) \mid dx/dt = v_t(x), x(0) = x_{0} \}$, the velocity field $v$ remains constant. Moreover, when the deterministic optimal transport map $T^{\star}$ exists, the following holds:
\begin{equation} \label{eq:displacement_def}
    \rho_{t}^{dis} := \left[ (1-t) \cdot Id + t \cdot  T^{\star} \right]_{\#} \mu \qquad \text{and} \qquad \rho_{t}^{dis} = \rho_{t}^{\star} \qquad \text{ for } \,\, 0 \leq t \leq 1.
\end{equation} 
where $Id$ denotes the identity map and $\rho^\star$ denotes the optimal dynamic OT plan of Eq. \ref{eq:Benamou-Brenier}. $\rho_{t}^{dis}$ is called \textit{McCann's Displacement Interpolation (DI)} \citep{mccann}. Eq. \ref{eq:displacement_def} shows that the pushforward of linear interpolation between $Id$ and the static OT map $T^{\star}$ is equivalent to the dynamic OT plan  $\rho_{t}^{\star}$. Hence, throughout this paper, we simply denote the displacement interpolation as $\rho_{t}^{\star}$. Furthermore, it is well known that the displacement interpolants satisfy the following property (Theorem 7.21 in \citet{villani}): 
 \begin{equation} \label{eq:displacement_opt}
    \rho^\star_t  = \arg \inf_\rho \mathcal{L}_{DI} (t, \rho)
    \quad \textrm{where} \quad \mathcal{L}_{DI} (t, \rho)  =  (1-t) W^2_2 (\mu, \rho) + t W^2_2 (\rho, \nu).
\end{equation}
Note that $\mathcal{L}_{DI}$ corresponds to the Wasserstein-2 barycenter problem between the two probability distributions $\mu, \nu$ \citep{agueh2011barycenters, pmlr-v235-kolesov24a}. In other words, Eq. \ref{eq:displacement_opt} represents the equivalence between the displacement interpolants and the Wasserstein barycenter. This equivalence will be utilized in Sec \ref{sec:method} to derive our approach to neural optimal transport, i.e., learning the optimal transport map $T^{\star}$ with a neural network. We establish how the optimal potential and transport maps for each $\rho_{t}^{\star}$ are related and use this relationship to improve neural optimal transport.

\section{Method} \label{sec:method}
In this section, we propose our method, called the \textit{Displacement Interpolation Optimal Transport Model (\textbf{DIOTM})}. Our model leverages displacement interpolation to improve the stable estimation of the optimal transport map using neural networks. In Sec \ref{sec:method_thm}, we derive our two main theorems for deriving our learning objective. In Sec \ref{sec:diotm}, we describe how we implement our DIOTM model based on these theoretical results.

\subsection{Dual Formulation of DI and the Relationship between Interpolation Potential Functions} \label{sec:method_thm}
In this subsection, we provide two theoretical results: the \textbf{Dual Formulation of Displacement Interpolation} (Thm \ref{thm:dual-form}) and the \textbf{Relationship between Interpolation Potential Functions} (Thm \ref{thm:hjb}). Thm \ref{thm:dual-form} will be utilized to derive our main learning objective (Eq. \ref{eq:main_objective}) and Thm \ref{thm:hjb} will be employed to formulate our regularizer (Eq. \ref{eq:hjb_reg}). 

\paragraph{Dual Problem of Displacement Interpolant}
Here, we begin with the dual formulation of the minimization characterization of the interpolant $\rho_{t}^{\star}$ (Eq. \ref{eq:displacement_opt}). The dual problem can be expressed as Eq. \ref{eq:barycenter} in the following Theorem \ref{thm:dual-form}. Note that while the primal formulation (Eq. \ref{eq:displacement_opt}) optimizes over the set of probability distribution $\rho$, the dual formulation optimizes over two potential functions $f_{1, t}, f_{2, t}$.

\begin{theorem} \label{thm:dual-form}
Given the assumptions in Appendix \ref{appen:proofs}, for a given $t \in (0,1)$, the minimization problem $\inf_{\rho} \mathcal{L}_{DI} (t, \rho)$ (Eq. \ref{eq:displacement_opt}) is equivalent to the following dual problem:
\begin{equation} \label{eq:barycenter}
     \sup_{f_{1, t}, f_{2, t} \in \mathcal{C} \, \, \textrm{with} \,\, (1-t)f_{1, t} + t f_{2, t} = 0} \left[ (1-t) \int_{\mathcal{X}}{f^c_{1,t}(x)d\mu(x)} + t \int_{\mathcal{Y}}{f^c_{2,t}(y)d\nu(y)} \right].
\end{equation}
where the supremum is taken over two continuous potential functions $f_{1, t} : \mathcal{Y} \rightarrow \mathbb{R}$ and $f_{2,t} : \mathcal{X} \rightarrow \mathbb{R}$, which satisfy $(1-t) f_{1, t} + t f_{2, t} = 0$.
Note that the assumptions in Appendix \ref{appen:proofs} guarantee the existence and uniqueness of displacement interpolants $\rho_{t}^{\star}$, the forward optimal transport map $\overrightarrow{T}^{\star}_t$ from $\mu$ to $\rho^\star_t$, and the backward transport map $\overleftarrow{T}^{\star}_t$ from $\nu$ to $\rho^\star_t$. Based on this, we have the following:
\begin{align}
    \overrightarrow{T}^{\star
    }_t(x) \in {\rm{arginf}}_{y\in \mathcal{Y}} \left[ c(x,y) - f_{1, t}^{\star}(y) \right], \quad
    \overleftarrow{T}^{\star}_t(y) \in {\rm{arginf}}_{x\in \mathcal{X}} \left[ c(x,y) - f_{2, t}^{\star}(x) \right],
\end{align}
$\mu$-a.s. for $x$ and $\nu$-a.s. for $y$.
By applying the same $c$-transfrom parametrization as in Eq. \ref{eq:def_T}, we derive the following max-min formulation of the dual problem:
\begin{multline}\label{eq:barycenter_maxmin}
    \sup_{f_{1, t}, f_{2, t} \in \mathcal{C} \, \, \textrm{with} \,\, (1-t)f_{1, t} + t f_{2, t} = 0} 
     (1-t) \int_{\mathcal{X}} \inf_{\overrightarrow{T}_{t}}
     \left( c(x, \overrightarrow{T}_{t}(x)) - f_{1,t}(\overrightarrow{T}_{t}(x))\right) d\mu(x) \\
     + t \int_{\mathcal{Y}}
     \inf_{\overleftarrow{T}_{t}}
     \left( c(\overleftarrow{T}_{t}(y), y) - f_{2,t}(\overleftarrow{T}_{t}(y))\right) d\nu(y) .
\end{multline} 
\end{theorem}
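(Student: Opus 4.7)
The plan is to derive the dual formulation by applying the semi-dual formulation (Eq.~\ref{eq:kantorovich-semi-dual}) separately to each of the two squared Wasserstein distances appearing in $\mathcal{L}_{DI}(t, \rho)$, and then to interchange the infimum over $\rho$ with the two suprema over potentials. Concretely, I would start from
\begin{equation*}
\inf_\rho \mathcal{L}_{DI}(t,\rho) = \inf_\rho \Bigl[(1-t) W_2^2(\mu,\rho) + t W_2^2(\rho,\nu)\Bigr],
\end{equation*}
use the semi-dual representation to write
$(1-t)W_2^2(\mu,\rho) = \sup_{f_{1,t}}[(1-t)\!\int f_{1,t}^c d\mu + (1-t)\!\int f_{1,t}\, d\rho]$
and analogously for $tW_2^2(\rho,\nu)$ with $f_{2,t}$, so that
\begin{equation*}
\inf_\rho \mathcal{L}_{DI}(t,\rho) = \inf_\rho \sup_{f_{1,t},f_{2,t}} \Bigl[(1-t)\!\int f_{1,t}^c d\mu + t\!\int f_{2,t}^c d\nu + \int \bigl((1-t)f_{1,t} + t f_{2,t}\bigr) d\rho \Bigr].
\end{equation*}

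Next I would swap the infimum and supremum. Since $\mathcal{L}_{DI}(t,\cdot)$ is convex in $\rho$ (Wasserstein distances are jointly convex), the objective is linear in $\rho$ once the potentials are fixed, and we can restrict the potentials to a suitable equicontinuous family (e.g., $c$-concave functions normalized by an additive constant) on the compact set $\overline{\mathcal{X}}$, Sion's minimax theorem applies. After the swap, the inner infimum in $\rho$ is linear in $\rho$ with integrand $(1-t)f_{1,t}(z) + t f_{2,t}(z)$; taking $\rho$ to be a Dirac mass at the minimizer of this integrand shows that the infimum is $-\infty$ unless $(1-t)f_{1,t} + t f_{2,t} \equiv 0$. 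Enforcing this constraint kills the coupling term and yields exactly Eq.~\ref{eq:barycenter}. The existence of the unique minimizer $\rho^\star_t$ under the stated assumptions (absolute continuity of $\mu, \nu$ on a bounded convex domain) together with Brenier's theorem guarantees that $\rho^\star_t$ attains the infimum and that optimal potentials $f^\star_{1,t}, f^\star_{2,t}$ realize the suprema in each semi-dual.

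To obtain the transport map characterization, I would invoke Brenier/Gangbo--McCann: for the quadratic cost, the optimal semi-dual potential $f^\star_{1,t}$ in $W_2^2(\mu, \rho^\star_t)$ induces the Brenier map $\overrightarrow{T}^\star_t(x) \in \arg\inf_y [c(x,y) - f^\star_{1,t}(y)]$, $\mu$-a.s., and symmetrically $\overleftarrow{T}^\star_t(y) \in \arg\inf_x [c(x,y) - f^\star_{2,t}(x)]$, $\nu$-a.s. Finally, the max-min reformulation (Eq.~\ref{eq:barycenter_maxmin}) follows by substituting the identity $f_{1,t}^c(x) = \inf_{\overrightarrow{T}_t} [c(x,\overrightarrow{T}_t(x)) - f_{1,t}(\overrightarrow{T}_t(x))]$ from the definition of the $c$-transform (Eq.~\ref{eq:def_c_transform}) into the dual objective, and analogously for $f_{2,t}^c$, exactly as in Eq.~\ref{eq:def_T}.

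The main obstacle will be rigorously justifying the inf-sup swap: one must argue that the class of admissible potentials can be restricted without loss to a compact, convex set (e.g., $c$-concave functions with a fixed normalization, uniformly bounded and Lipschitz on $\overline{\mathcal{X}}$) on which the objective is upper semicontinuous and concave in $(f_{1,t}, f_{2,t})$ and convex/lower semicontinuous in $\rho$ with respect to weak convergence. The remaining delicate point is the reduction from "constraint $(1-t)f_{1,t} + t f_{2,t} = 0$ holds $\rho^\star_t$-a.e." to the everywhere statement appearing in the theorem, which can be handled by redefining the potentials off $\mathrm{supp}(\rho^\star_t)$ via $c$-concave envelopes without changing the objective value. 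Everything else reduces to standard applications of the semi-dual OT theory recalled in Section~\ref{sec:background}.
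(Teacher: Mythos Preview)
Your overall strategy matches the paper's: apply the semi-dual to each $W_2^2$ term, swap $\inf_\rho$ and $\sup_{f_1,f_2}$ via Sion, reduce to the constraint $(1-t)f_{1,t}+tf_{2,t}=0$, and then invoke Kantorovich/Brenier duality for the transport-map characterization. However, the crucial reduction step contains a genuine error.

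You claim that after the swap, ``taking $\rho$ to be a Dirac mass at the minimizer of this integrand shows that the infimum is $-\infty$ unless $(1-t)f_{1,t}+tf_{2,t}\equiv 0$.'' This is false: $\rho$ ranges over \emph{probability} measures on the compact set $\mathcal{X}$, and the integrand $g:=(1-t)f_{1,t}+tf_{2,t}$ is continuous (hence bounded) there, so $\inf_\rho\int g\,d\rho=\min_{z\in\mathcal{X}}g(z)=:m$, which is finite. The constraint does \emph{not} emerge from an indicator-type $-\infty$ argument (that would require $\rho$ to range over signed or unnormalized measures). What the paper actually does---and what you need---is: given any pair $(f_{1,t},f_{2,t})$, replace $f_{1,t}$ by $\tilde f_{1,t}:=(m-tf_{2,t})/(1-t)$, so that $(1-t)\tilde f_{1,t}+tf_{2,t}\equiv m$. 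Since $g\ge m$ pointwise one has $\tilde f_{1,t}\le f_{1,t}$, hence $\tilde f_{1,t}^{\,c}\ge f_{1,t}^{\,c}$ by antitonicity of the $c$-transform, so the objective does not decrease. Then use $(f+a)^c=f^c-a$ to shift the constant $m$ to $0$. Without this monotonicity step you only have the trivial inequality (constrained $\le$ unconstrained); the reverse inequality is precisely the content you are missing.

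A minor remark: for the minimax swap the paper relies on compactness of $\mathcal{P}(\mathcal{X})$ (via compactness of $\mathcal{X}$) rather than on compactifying the potential class, which is simpler and sidesteps the equicontinuity discussion you sketch. Also, your ``delicate point'' about the constraint holding $\rho_t^\star$-a.e.\ versus everywhere does not arise once the reduction is carried out as above, since the pointwise constraint is imposed on the potential side before $\rho_t^\star$ ever enters the argument.
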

Note that the max-min formulation above requires optimization over four functions: the two potentials $f_{1,t}, f_{2,t}$ and the two transport maps $\overrightarrow{T}_{t}, \overleftarrow{T}_{t}$. Here, \textbf{we provide an intuitive interpretation of this max-min optimization (Eq. \ref{eq:barycenter_maxmin})}.
Two transport maps $\overrightarrow{T}_{t}, \overleftarrow{T}_{t}$ act as generators of the interpolant $\rho_{t}^{\star}$ from $\mu$ and $\nu$, respectively. Two potential functions $f_{1}, f_{2}$ serve similar roles to discriminators for the generated samples (fake samples) from $\overrightarrow{T}_{t}, \overleftarrow{T}_{t}$. The discriminator values for the true samples $\rho_{t}^{\star}$ cancels out because of the potential condition $(1-t)f_{1, t} + t f_{2, t} = 0$. Formally,
\begin{equation}
    (1-t) \int f_{1, t} (z) \, d \rho_{t}^{\star}(z)
    + t \int f_{2, t} (z) \, d \rho_{t}^{\star}(z) = 
    \int \left[ (1-t)f_{1, t} + t f_{2, t} \right](z) \, d \rho_{t}^{\star}(z) =
    0.
\end{equation}
In fact, using this potential condition, \textbf{we can combine these two potentials into a single value function (or potential function) $V_{t}$.} This simplified formulation of Eq. \ref{eq:barycenter} will be used to derive our regularizer in Theorem \ref{thm:hjb}. 
\begin{corollary} \label{cor:dual-form-simple}
    For a given $t \in (0, 1)$, let $f_{1, t}(y) = t V_{t}(y)$ and $f_{2, t}(x) = - (1-t) V_t(x)$ for some value function $V_{t} : \mathcal{X} = \mathcal{Y} \rightarrow \mathbb{R}$. Then, the dual formulation of displacement interpolation (Eq. \ref{eq:barycenter}) can be rewritten as follows:
        \begin{equation} \label{eq:potential-dual}
         \sup_{V_t} \left[ \int_{\mathcal{X}}{V^{c_{0,t}}_t(x)d\mu(x)} +  \int_{\mathcal{Y}}{(-V_t)^{c_{t,1}} (y)d\nu(y)} \right],
    \end{equation}
    where $c_{s,t} (x,y) = \frac{\alpha \lVert x-y \rVert^2}{t - s}$ for every $0\le s < t \le 1$.
\end{corollary}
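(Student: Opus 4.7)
The plan is to substitute the ansatz $f_{1,t}=tV_t$, $f_{2,t}=-(1-t)V_t$ directly into the dual problem (Eq.\ \ref{eq:barycenter}) and exploit the positive homogeneity of the quadratic cost $c(x,y)=\alpha\|x-y\|^2$ to absorb the outer factors of $t$ and $1-t$ into the rescaled costs $c_{0,t}$ and $c_{t,1}$.

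First I would verify that the proposed parametrization is without loss of generality. The linear admissibility constraint $(1-t)f_{1,t}+tf_{2,t}=0$ is equivalent to $f_{2,t}=-\frac{1-t}{t}f_{1,t}$, so the map $(f_{1,t},f_{2,t})\mapsto V_t:=f_{1,t}/t$ is a bijection between admissible pairs and arbitrary functions $V_t:\mathcal{X}\to\mathbb{R}$, under which the ansatz in the corollary is exactly recovered. Hence switching from the constrained sup over $(f_{1,t},f_{2,t})$ to an unconstrained sup over $V_t$ is a pure change of variables.

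Next I would rewrite the two $c$-transforms using the scalar identity $\inf_z[\lambda g(z)] = \lambda \inf_z g(z)$ valid for $\lambda>0$. Pulling the factor $t$ out of $\alpha\|x-y\|^2 - tV_t(y) = t\bigl[\alpha\|x-y\|^2/t - V_t(y)\bigr]$ yields $f_{1,t}^c(x) = t\cdot V_t^{c_{0,t}}(x)$, and the completely analogous manipulation with $1-t$ applied to $\alpha\|x-y\|^2+(1-t)V_t(x)$ gives $f_{2,t}^c(y) = (1-t)\cdot(-V_t)^{c_{t,1}}(y)$. This step is the substantive content of the proof; the quadratic (positively homogeneous) form of $c$ is precisely what makes these rescalings clean.

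Substituting back into Eq.\ \ref{eq:barycenter}, the outer weights $(1-t)$ and $t$ combine with the inner factors $t$ and $(1-t)$ to produce a common prefactor $t(1-t)>0$ in front of both integrals. Since this prefactor is independent of $V_t$, it factors outside the supremum, leaving exactly Eq.\ \ref{eq:potential-dual}. The calculations are routine; the only conceptual subtlety worth flagging is that the rewriting preserves the argmax $V_t^\star$ exactly while rescaling the objective value by the positive constant $t(1-t)$, so the corollary is naturally read as a statement about the shared maximizer — which is what Theorem \ref{thm:hjb} will subsequently exploit.
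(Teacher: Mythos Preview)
Your proposal is correct and matches the paper's approach: the paper treats this corollary as an immediate consequence of Theorem~\ref{thm:dual-form}, simply stating the substitution $f_{1,t}=tV_t$, $f_{2,t}=-(1-t)V_t$ without spelling out the $c$-transform rescaling that you carry out explicitly. Your observation that the objective acquires an overall positive factor $t(1-t)$, so that Eq.~\ref{eq:potential-dual} shares the same maximizer as Eq.~\ref{eq:barycenter} rather than being literally equal to it, is a useful clarification that the paper glosses over.
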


\paragraph{Relationship between Interpolation Potential Functions}
Here, based on Corollary \ref{cor:dual-form-simple}, we derive the optimality condition, that each interpolant value function $V_{t}$ satisfies, as the time-dependent value function $V(t, x) = V_{t}(x)$. From now on, we will \textbf{denote the value function in its time-dependent form: $V(t, x) : (0, 1) \times \mathcal{X}=\mathcal{Y} \rightarrow \mathbb{R}$}. 
\begin{theorem} \label{thm:hjb}
    Given the assumptions in Appendix \ref{appen:proofs}, the optimal $V^\star_t$ in Eq. \ref{eq:potential-dual} satisfies the following:
    \begin{equation} \label{eq:potential-dual2}
        V^\star_t = \arg \sup_{V_t} \left[ \int_{\mathcal{X}}{V^{c_{0,t}}_t(x)d\mu(x)} + \int_{\mathcal{Y}}{V_t(x)d\rho^\star_t(x)} \right], 
    \end{equation}
    up to constant $\rho^\star$-a.s.. Moreover, there exists $\{ V^\star_t \}_{0\le t\le 1}$ that satisfies Hamilton-Jacobi-Bellman (HJB) equation, i.e.
    \begin{equation} \label{eq:optimal_hjb}
        \partial_t V^\star_t + \frac{1}{4\alpha} \Vert \nabla V^\star_t \Vert^2 = 0, \qquad \rho^\star \text{-a.s.}
    \end{equation}
\end{theorem}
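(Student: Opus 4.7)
The proof separates into the two claims.

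\textbf{Claim 1} (Eq.~\ref{eq:potential-dual2}). I would connect the two functionals via the backward map $\overleftarrow{T}^{\star}_t$ from Theorem~\ref{thm:dual-form}. Unpacking the $c$-transform $(-V^\star_t)^{c_{t,1}}(y) = \inf_{x}\bigl[\tfrac{\alpha\|x-y\|^2}{1-t} + V^\star_t(x)\bigr]$ and noting that, under the identification $f_{2,t} = -(1-t)V^\star_t$, the arginf is exactly $x = \overleftarrow{T}^{\star}_t(y)$, integration against $\nu$ together with the pushforward $(\overleftarrow{T}^{\star}_t)_{\#}\nu = \rho^\star_t$ yields
\begin{equation*}
    \int (-V^\star_t)^{c_{t,1}}(y)\, d\nu(y) \;=\; \frac{W_2^2(\rho^\star_t,\nu)}{1-t} \;+\; \int V^\star_t\, d\rho^\star_t.
\end{equation*}
The first summand is a $V_t$-independent constant, so the objectives of Eq.~\ref{eq:potential-dual} and Eq.~\ref{eq:potential-dual2} coincide at $V^\star_t$ modulo this constant. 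Separately, Theorem~\ref{thm:dual-form} gives $\overrightarrow{T}^{\star}_t(x) \in \arg\inf_{y}\bigl[c_{0,t}(x,y) - V^\star_t(y)\bigr]$, so $V^\star_t$ is a Kantorovich potential for $OT_{c_{0,t}}(\mu,\rho^\star_t)$ — precisely a maximizer of the semi-dual Eq.~\ref{eq:potential-dual2}. Uniqueness up to additive constant $\rho^\star_t$-a.s.\ is then the standard uniqueness of Kantorovich potentials under the regularity assumptions of Appendix~\ref{appen:proofs}.

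\textbf{Claim 2} (HJB~(\ref{eq:optimal_hjb})). I would go through the Benamou--Brenier dual. Dualizing~(\ref{eq:Benamou-Brenier}) via the continuity equation against a time-dependent test potential $\psi(t,x)$ produces
\begin{equation*}
    \sup_{\psi}\Bigl\{\, \textstyle\int \psi(1,\cdot)\, d\nu - \int \psi(0,\cdot)\, d\mu \;:\; \partial_t\psi + \tfrac{1}{4\alpha}\|\nabla\psi\|^2 \le 0 \Bigr\},
\end{equation*}
and strong duality forces this subsolution inequality to be saturated on $\mathrm{supp}(\rho^\star)$ at an optimizer $\psi^\star$. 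To finish, I would identify $\psi^\star(t,\cdot)$ with $V^\star_t$ modulo additive constants: Claim~1 shows $V^\star_t$ is the Kantorovich potential for $OT_{c_{0,t}}(\mu,\rho^\star_t)$, so by Brenier's theorem its gradient encodes the OT map from $\rho^\star_t$ back to $\mu$; meanwhile, the characteristic ODE $\dot X_t = \nabla\psi^\star_t(X_t)/(2\alpha)$ associated with HJB must coincide with the constant displacement velocity $T^\star(x) - x$ along the geodesic $X_t = (1-t)x + tT^\star(x)$. These two relations force $\nabla V^\star_t = \nabla\psi^\star_t$ $\rho^\star_t$-a.s., and fixing the time-direction normalization at $t = 0$ pins the additive constant, so $V^\star_t$ inherits the HJB equation on $\mathrm{supp}(\rho^\star)$.

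\textbf{Main obstacle.} The delicate step is identifying $V^\star_t$ with the Benamou--Brenier dual $\psi^\star(t,x)$ as a jointly measurable function of $(t,x)$ whose time derivative and spatial gradient are both well-defined $\rho^\star$-a.s. This needs $c_{0,t}$-concavity of each $V^\star_t$ (to yield $\rho^\star_t$-a.s.\ differentiability via Aleksandrov-type arguments), a measurable selection of additive constants across $t$, and care at the degenerate boundary times $t \in \{0,1\}$ where $c_{0,t}$ and $c_{t,1}$ blow up. Once this identification is in place, HJB also admits a direct characteristic verification: $\nabla V^\star_t(X_t) = 2\alpha(T^\star(x) - x)$ is time-constant along the geodesic, giving $V^\star_t(X_t) = V^\star_0(x) + t\alpha\|T^\star(x) - x\|^2$, and hence $\partial_t V^\star_t = \dot V^\star_t(X_t) - \nabla V^\star_t(X_t)\cdot\dot X_t = -\alpha\|T^\star(x) - x\|^2 = -\|\nabla V^\star_t\|^2/(4\alpha)$, as required.
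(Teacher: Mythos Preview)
For Claim~1 your argument matches the paper's: both recognize $V^\star_t$ as a Kantorovich potential for the $c_{0,t}$-transport between $\mu$ and $\rho^\star_t$, citing the work in Theorem~\ref{thm:dual-form}. The paper does this in one line by referring back to Step~2 of that theorem's proof, while you unpack the $\overleftarrow{T}^\star_t$ computation explicitly.

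For Claim~2 the routes genuinely differ. The paper never touches the Benamou--Brenier dual; instead it invokes Villani's Theorem~7.36 and Remark~7.37, which directly supply a family $\{V_t\}$ obeying the Hopf--Lax semigroup relation $-V_s(x) = \inf_y\bigl(c_{s,t}(x,y) - V_t(y)\bigr)$ for all $0<s<t<1$, and then extracts HJB by passing to the limit $t\to s$ in the associated optimal-control value function. This entirely sidesteps your ``main obstacle'' of assembling a jointly $(t,x)$-regular potential with well-defined $\partial_t$ --- Villani's result hands over that object ready-made. Your Benamou--Brenier identification is a valid alternative and arguably more transparent about \emph{why} HJB enters, but it trades a single citation for the longer gradient-matching argument.

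One genuine gap in your closing ``direct characteristic verification'': the implication ``$\nabla V^\star_t(X_t)$ constant in $t$ $\Rightarrow$ $V^\star_t(X_t) = V^\star_0(x) + t\alpha\|T^\star(x)-x\|^2$'' does not follow. Knowing the spatial gradient along the geodesic says nothing about $\partial_t V^\star_t$; by Young's equality $V^\star_t(X_t) + (V^\star_t)^{c_{0,t}}(x) = \alpha t\|T^\star(x)-x\|^2$, your displayed formula is equivalent to $(V^\star_t)^{c_{0,t}}(x)$ being $t$-independent --- which is exactly the Hopf--Lax property the paper cites and which your argument has not independently established. As a standalone route that paragraph is circular, though harmless since your main Benamou--Brenier identification already closes the proof.
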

In Sec \ref{sec:diotm}, we use this HJB optimality condition (Eq. \ref{eq:optimal_hjb}) as a regularizer for the value function $V$ in our model.
In Sec \ref{sec:experiments}, we demonstrate that, when combined with our displacement interpolation, this regularizer significantly improves the stability of optimal transport map estimation.


\begin{algorithm}[t]
\caption{Training algorithm of DIOTM}
\begin{algorithmic}[1]
\Require The source distribution $\mu$ and the target distribution $\nu$. Transport networks \brown{$\overrightarrow{T}_{\theta_{1}}$, $\overleftarrow{T}_{\theta_{2}}$} and the discriminator network $V_\phi$. Total iteration number $K$, and regularization hyperparameter $\lambda$.
\For{$k = 0, 1, 2 , \dots, K$}
    \State Sample a batch $x \sim \mu$, $y \sim \nu$,  $t\sim U[0,1]$.
    \vspace{2pt}
    \State $x_t \leftarrow (1-t) x + t \overrightarrow{T}_{\theta_{1}} (x)$, $y_t \leftarrow (1-t) \overleftarrow{T}_{\theta_{2}} (y) + t y$.
    \vspace{2pt}
    \State Update $V_\phi$ to increase the $\mathcal{L}_\phi$
    $$ \mathcal{L}_\phi  = - V_\phi (t, x_t) + V_\phi(t, y_t) - \lambda \mathcal{R}(V_\phi (t, x_t)) - \lambda \mathcal{R}(V_\phi (t, y_t)). $$
    \State Update $\overrightarrow{T}_{\theta_{1}}$ to decrease the loss:
     $c_{0,t} (x, x_t) - V_\phi(t, x_t). $
    \State Update $\overleftarrow{T}_{\theta_{2}}$ to decrease the loss:
    $ c_{t,1} (y_t, y) + V_\phi(t, y_t). $
\EndFor
\end{algorithmic}
\label{alg:bi-otm}
\end{algorithm}

\subsection{Displacement Interpolation Optimal Transport Model} \label{sec:diotm}
In this subsection, we introduce our model, called \textit{Displacement Interpolation Optimal Transport Model (DIOTM)}. The goal of our model is to learn the optimal transport map between the source distribution $\mu$ and the target distribution $\nu$. DIOTM is trained by utilizing the displacement interpolation $\rho_{t}^{\star}$ (Eq. \ref{eq:displacement_def}) between these two distributions. 
In DIOTM, the forward and backward transport maps $\overrightarrow{T}, \overleftarrow{T}$ are trained to match all intermediate distributions. As a result, each transport map is not directly trained to generate the boundary distributions $\mu$ and $\nu$, but instead exploits the matching of intermediate distributions. This approach enables our model to achieve a more stable estimation of the optimal transport map.

\paragraph{Parametrization of the DI Dual Problem}
Our DIOTM learns the static optimal transport map between $\mu$ and $\nu$ by exploiting the displacement interpolation (DI) $\rho_{t}^{\star}$, which is the solution of the dynamic optimal transport problem \citep{villani}. 
However, there are some challenges when using DI to learn the static transport map.
While our goal is to learn the optimal transport maps between $\mu$ and $\nu$, the max-min dual formulation of the DI dual problem (Eq. \ref{eq:barycenter_maxmin}) applies to each specific time $t \in (0,1)$. In other words, the intermediate transport maps $\overrightarrow{T}_{t}, \overleftarrow{T}_{t}$ and the potential $V_t$ are defined separately for each $t$. 
Therefore, \textbf{we represent these interpolant generators $\overrightarrow{T}_{t}, \overleftarrow{T}_{t}$ through the boundary generators $\overrightarrow{T}, \overleftarrow{T}$ by incorporating the optimality condition of DI (Eq. \ref{eq:displacement_def})}.
Specifically, we parametrize the interpolant generators $\overrightarrow{T}_t$ and $\overleftarrow{T}_t$ as follows: 
\begin{equation} \label{eq:di_generators_param}
    \overrightarrow{T}_t (x) = (1-t) x + t \overrightarrow{T}_{\theta_{1}} (x), \quad \overleftarrow{T}_t (y) = (1-t) y + t \overleftarrow{T}_{\theta_{2}} (y)
    \quad \textrm{ for } t \in (0,1).
\end{equation}
where $\overrightarrow{T}_{\theta_{1}}$ and $\overleftarrow{T}_{\theta_{2}}$ parametrize the optimal transport maps from the source $\mu$ to the target $\nu$ and from $\nu$ to $\mu$, respectively. 
The optimality condition of DI clarifies how the intermediate optimal transport maps $\overrightarrow{T}^{\star}_t, \overleftarrow{T}^{\star}_t$ are related to each other. We utilize this condition to parametrize the entire interpolant $\rho_{t}^{\star}$ for $t\in (0,1)$ using just two networks $\overrightarrow{T}_{\theta_{1}}: \mathcal{X} \rightarrow \mathcal{Y}$ and $\overleftarrow{T}_{\theta_{2}}: \mathcal{Y} \rightarrow \mathcal{X}$. Note that this optimality condition is satisfied under the optimal value function $V_{\phi}$ is given. Thus, this parametrization is introduced for better efficiency.
Additionally, we already investigated how the potentials, specifically the value function $V(t, x)$, are related in Theorem \ref{thm:hjb}. Therefore, by combining Eq \ref{eq:potential-dual} and \ref{eq:di_generators_param}, we arrive at our main max-min learning objective:
\begin{multline} \label{eq:main_objective}
    \mathcal{L}_{\phi, \theta}^{DI} = 
    \sup_{V_{\phi}} \int_{\mathcal{X}} \,\, \inf_{\overrightarrow{T}_{\theta_{1}}}
     \mathbb{E}_{t} \left[ \frac{\alpha}{t} \|x - \overrightarrow{T}_{t}(x) \|^{2} - V_{\phi}(t, \overrightarrow{T}_{t}(x))\right] d\mu(x) \\
     + \int_{\mathcal{Y}}
     \,\, \inf_{\overleftarrow{T}_{\theta_{2}}}
     \mathbb{E}_{t} \left[ \frac{\alpha}{1-t} \|\overleftarrow{T}_{t}(y) - y \|^{2} + V_{\phi}(t, \overleftarrow{T}_{t}(y))\right] d\nu(y) .
\end{multline} 
Here, $\alpha$ indicates the cost intensity hyperparameter, i.e., $c(x,y)=\alpha \lVert x-y \rVert_2^2$. 
Note that the expectation with respect to $t$ is for aggregating the interpolant $\rho_{t}^{\star}$ for $t \in (0,1)$.
When represented with our transport map parametrizations $\overrightarrow{T}_{\theta_{1}}$ and $\overleftarrow{T}_{\theta_{2}}$ (Eq. \ref{eq:di_generators_param}), our main learning objective $\mathcal{L}_{\phi, \theta}^{DI}$ can be expressed as follows:
\begin{multline} \label{eq:main_objective2}
    \mathcal{L}_{\phi, \theta}^{DI} = 
    \sup_{V_{\phi}} \int_{\mathcal{X}} \,\, \inf_{\overrightarrow{T}_{\theta_{1}}}
     \mathbb{E}_{t} \left[ \alpha t \|x - \overrightarrow{T}_{\theta}(x) \|^{2} - V_{\phi}(t, \overrightarrow{T}_{t}(x))\right] d\mu(x) \\
     + \int_{\mathcal{Y}}
     \,\, \inf_{\overleftarrow{T}_{\theta_{2}}}
     \mathbb{E}_{t} \left[ \alpha (1-t) \|\overleftarrow{T}_{\theta}(y) - y \|^{2} + V_{\phi}(t, \overleftarrow{T}_{t}(y))\right] d\nu(y) .
\end{multline} 

Moreover, we introduce the \textit{HJB regularizer} $\mathcal{R}(V_\phi)$, which is derived from the HJB optimality condition of the value function, proved in Theorem \ref{thm:hjb}:
\begin{equation} \label{eq:hjb_reg}
    \mathcal{R}(V_\phi) = \mathbb{E}_{t, x \sim \rho_{t}} \left| 2 \alpha \ \partial_t V_\phi(t, x) + \frac{1}{2}\Vert \nabla V_\phi(t,x)\Vert^2 \right|.
\end{equation}
As a result, the learning objective can be summarized as follows:
\begin{equation} \label{eq:total_objective}
    \mathcal{L}_{\phi, \theta} = \mathcal{L}_{\phi, \theta}^{DI} + \lambda \mathcal{R}(V_\phi(t, x)).
\end{equation}
where $\lambda > 0 $ denotes the HJB regularizer intensity hyperparameter.

\paragraph{Algorithm}
We present our training algorithm for DIOTM (Algorithm \ref{alg:bi-otm}). Our adversarial training objective $\mathcal{L}_{\phi, \theta}$ updates alternatively between the value function $V_{\phi}$ and the two transport maps $\overrightarrow{T}_{\theta}, \overleftarrow{T}_{\theta}$, similar to GAN framework \citep{gan}. Note that we simplified Algorithm \ref{alg:bi-otm} by omitting the non-dependent terms for each neural network. Additionally, we apply the HJB regularizer to both generated distributions, i.e., the forward generated distribution from $\overrightarrow{T}_{\theta}$ and the backward generated distribution from $\overleftarrow{T}_{\theta}$.
Finally, throughout all experiments, we use the uniform distribution for time sampling, i.e., $t \sim U[0,1]$. However,  the time sampling distribution can be freely modified. We leave the investigation of the optimal time sampling distribution to future work. In particular, if we set $t \sim \delta_{1}$, i.e., $t=1$, the training algorithm becomes similar to OTM \citep{otm}. In this case, we do not use any displacement interpolation information for $t \in (0,1)$. The only difference is our HJB regularizer.

\section{Related Work}
OT problem addresses a transport map between two distributions that minimizes the predefined cost function. 
Starting from the dual formulations \citep{Kantorovich1948, semi-dual1, semi-dual2, semi-dual3}, diverse methods for estimating OT Map have been developed based on minimax problem \citep{wgan-qc, icnn-ot, ae-ot, fanscalable, otm, uotm, uotmsd, not}. 
In particular, \cite{fanscalable,otm,not} derived adversarial algorithm from the semi-dual formulation of OT problem and properly recovered OT maps compared to other previous works \citep{icnn-ot}. 
Moreover, they provided moderate performance in image generation and image translation tasks for large-scale datasets. 
Another line of research focuses on the dynamical formulation of OT problems \citep{ipf, imf, gsbm, rectifiedFlow, wlf, asbm}. 
Several works \citep{zhang2021diffusion,ipf, imf, gsbm} use sampling-based approaches, which require numerically simulating ODEs or SDEs during training. 
More recently, some methods \citep{wlf, asbm} have introduced simulation-free techniques, often incorporating adversarial learning strategies. 
Since our approach also leverages the dynamical properties of OT, we compare our method with \citet{imf, asbm}, both of which have demonstrated scalability in image translation tasks.

\section{Experiments} \label{sec:experiments}
In this section, we conduct experiments on various datasets to evaluate our model from the following perspectives. 
\begin{itemize}[topsep=-1pt, itemsep=0pt]
    \item In Sec \ref{sec:toy}, we compare our model with the ground-truth solution from POT \cite{pot} on synthetic datasets to assess how well our model approximates OT maps. 
    \item In Sec \ref{sec:I2I}, we compare our model with various OT models on image-to-image translation tasks to evaluate the scalability of our model.
    \item In Sec \ref{sec:ablation}, we evaluate the training stability of our DIOTM model and investigate the effectiveness of HJB regularizer compared to other approaches, such as OTM regularizer and R1 regularizer.
\end{itemize}
For implementation details of experiments, please refer to Appendix \ref{appen:implementation}.

\begin{table} [t]
    \centering
      \setlength\tabcolsep{5.2pt}
       \caption{
       \textbf{Evaluation of optimal transport map} on the synthetic datasets between OTM \citep{otm} and ours, based on the 2-Wasserstein distance $W_{2}$ and the L2 distance between transport maps.
       } 
       \label{tab:toy}
       \vspace{-5pt}
    \centering
   \scalebox{0.9}{
        \begin{tabular}{ccccccccc}
        \toprule
        \multirow{2}{*}{Metric} &        \multicolumn{2}{c}{G$\rightarrow$8G}       &  \multicolumn{2}{c}{G$\rightarrow$25G} & \multicolumn{2}{c}{Moon$\rightarrow$Spiral} & \multicolumn{2}{c}{G$\rightarrow$Circles} \\
         \cmidrule(lr){2-3} \cmidrule(lr){4-5}  \cmidrule(lr){6-7} \cmidrule(lr){8-9} 
         & OTM & DIOTM & OTM & DIOTM & OTM & DIOTM & OTM & DIOTM  \\
        \midrule
        $W_2$ $(\downarrow)$  & 4.93 & \textbf{3.72} & 10.09 &\textbf{ 6.49} & \textbf{0.40} & 0.55 & 3.96 & \textbf{2.34}  \\ 
        L2 $(\downarrow)$ & 6.48 & \textbf{4.38} & 13.09 & \textbf{10.00} & 1.77 & \textbf{1.67} & 6.23 & \textbf{5.44} \\
        \bottomrule
        \end{tabular}
  }
  \vspace{-3pt}
\end{table}
\begin{figure*}[t] 
    \centering
    \begin{subfigure}[b]{.8\textwidth}
        \includegraphics[width=\textwidth]{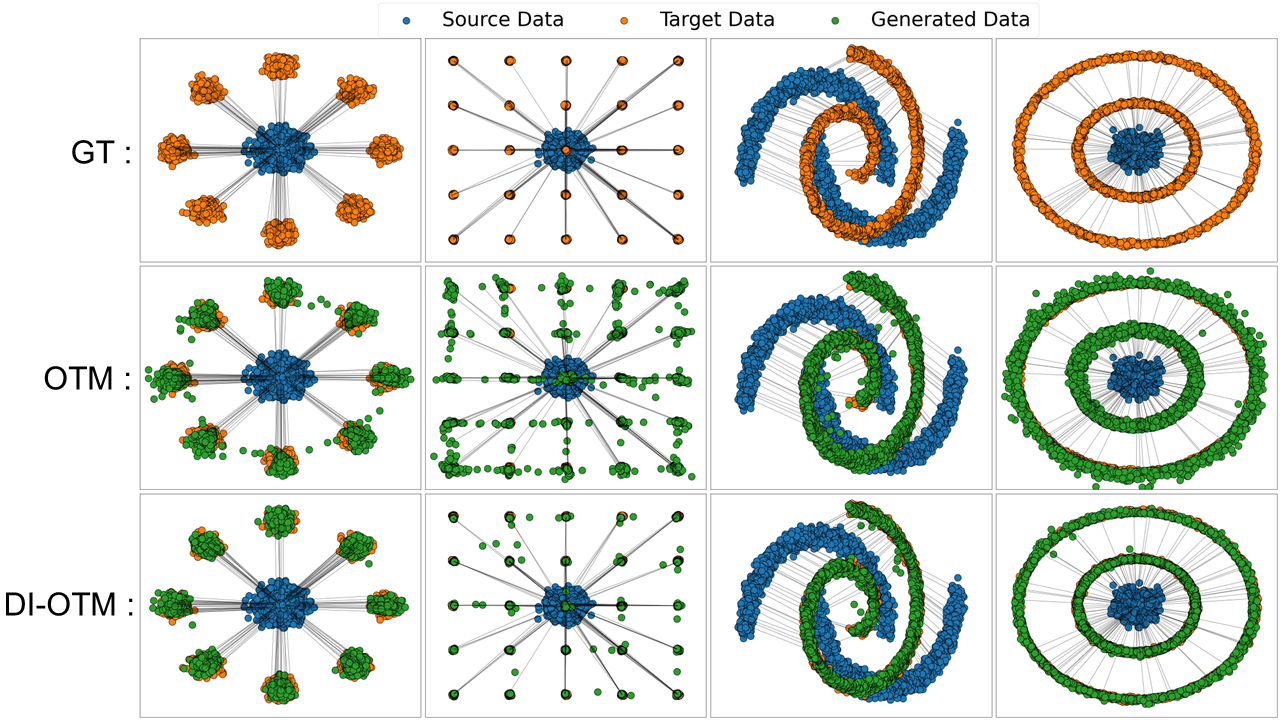}
    \end{subfigure}
    \caption{
    \textbf{Visualization of transport maps $T$ on synthetic datasets.} The transport map is visualized as a black line connecting each source sample $x$ to its corresponding generated data $T(x)$.
    }
    \vspace{-10pt}
    \label{fig:toy_vis}
\end{figure*}

\subsection{Optimal Transport Map Evaluation on Synthetic Datasets}  \label{sec:toy}
First of all, \textbf{we evaluate whether our model can accurately learn the optimal transport map $T^{\star}$ from the source distribution $\mu$ and the target distribution $\nu$.} We assess our model against OTM \citep{otm, fanscalable} by comparing them to the discrete OT solution from the POT library \citep{pot}. 
As discussed in Sec \ref{sec:diotm}, when the time sampling distribution is set to $\delta_{t=1}$, our DIOTM presents a similar framework with OTM. Hence, we consider OTM an appropriate baseline for demonstrating the advantage of using displacement interpolation. 
Note that OTM demonstrated the most competitive performance as a neural optimal transport map in \citet{strikes}.
Also, the discrete OT solution indicates the optimal transport map between empirical distributions, i.e., $\mu = \sum_{i} \delta_{x_{i}}, \nu = \sum_{j} \delta_{y_{j}}$, computed via convex optimization. Hence, this discrete OT solution serves as the proxy ground-truth solution of the continuous OT map. We tested our model on four synthetic datasets: \textit{Gaussian-to-8Gaussian (G$\rightarrow$8G), Gaussian-to-25Gaussian (G$\rightarrow$25G), Moon-to-Spiral, and Gaussian-to-Circles (G$\rightarrow$Circles)}.

Fig. \ref{fig:toy_vis} visualizes the transport maps and Tab. \ref{tab:toy} presents the quantitative evaluation metric results. In Tab. \ref{tab:toy}, we evaluate each transport map through two metrics. First, we calculate the 2-Wasserstein distance $W_{2}(\overrightarrow{T_{\theta}}_{\#} \mu, \nu)$ between the generated distribution $\overrightarrow{T_{\theta}}_{\#} \mu$ and the target distribution $\nu$. 
Second, we evaluate whether the neural optimal transport correctly recovers the optimal pairings $(x, T(x))$. Specifically, we compute the discrete optimal transport $T_{disc}^{\star}$ on test datasets using POT and measure the L2 distance between transport maps, i.e., $\int_{\mathcal{X}} \lVert \overrightarrow{T}(x) - T^\star (x) \lVert^2_{2} d \mu_{test} (x)$.
Fig. \ref{fig:toy_vis} shows that our DIOTM more accurately approximates the target distribution, as indicated by the smaller distribution error between the orange target data and the green generated data. This is further supported by the quantitative results. In Tab. \ref{tab:toy}, DIOTM achieves a smaller distribution error ($W_{2}$) on three out of four datasets and consistently better recovers the optimal coupling ($L2$) across all datasets. In summary, our DIOTM provides a better approximation of the optimal transport map $T^{\star}$ compared to OTM.

\begin{figure*}[t] 
    \centering
    \begin{subfigure}[b]{.49\textwidth}
        \includegraphics[width=\textwidth]{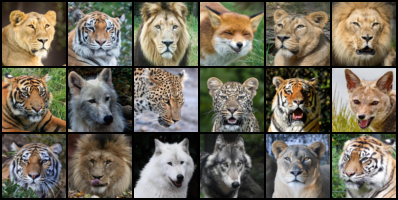}
    \end{subfigure}
    \hfill
    \begin{subfigure}[b]{.49\textwidth}
        \includegraphics[width=\textwidth]{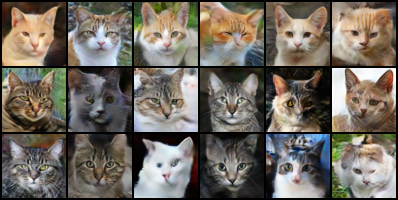}
    \end{subfigure}
    \caption{
    \textbf{Image-to-Image translation results} of DIOTM for Wild $\rightarrow$ Cat ($64 \times 64$) on AFHQ \citep{afhq}. The left figure shows the source images and the right figure shows the corresponding translated images by DIOTM.
    }
    \vspace{-10pt}
    \label{fig:w2c64}
\end{figure*}

\begin{figure*}[t] 
    \centering
    \begin{subfigure}[b]{.49\textwidth}
        \includegraphics[width=\textwidth]{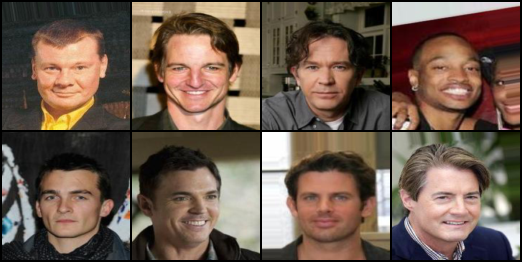}
    \end{subfigure}
    \hfill
    \begin{subfigure}[b]{.49\textwidth}
        \includegraphics[width=\textwidth]{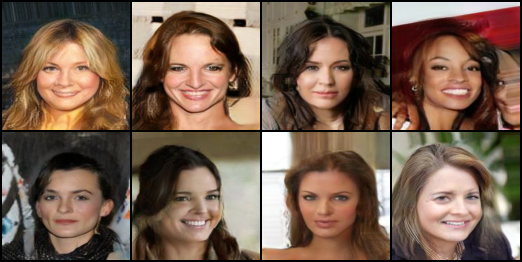}
    \end{subfigure}
    \caption{
    \textbf{Image-to-Image translation results} of DIOTM for Male $\rightarrow$ Female ($128 \times 128$) on CelebA \citep{celeba}.
    }
    \vspace{-10pt}
    \label{fig:m2f128}
\end{figure*}

\subsection{Scalability Evaluation in Image-to-Image Translation Tasks} \label{sec:I2I}
We assessed our model on several Image-to-Image (I2I) translation benchmarks: \textit{Male $\rightarrow$ Female \citep{celeba}} ($64 \times 64$), \textit{Wild $\rightarrow$ Cat \citep{afhq}} ($64 \times 64$), and \textit{Male $\rightarrow$ Female \citep{celeba}} ($128 \times 128$). Intuitively, the optimal transport map serves as a generator for the target distribution, which maps each input $x$ to its cost-minimizing counterpart $y$. For instance, if we consider Male $\rightarrow$ Female task with the quadratic cost $\| x - y\|_{2}^{2}$, the optimal transport map translates each male image into a female image while minimizing the pixel-level difference. Consequently, various (entropic) optimal transport approaches are widely used for the I2I translation tasks. Therefore, we compared our model with the optimal transport models (NOT \citep{fanscalable} and OTM \citep{otm}) and the entropic optimal transport models (DSBM \citep{dsbm} and ASBM \citep{asbm}) on image-to-image (I2I) translation tasks.

Fig. \ref{fig:w2c64} and \ref{fig:m2f128} present the I2I translation results. In each figure, the left subfigure shows source image samples. The right subfigure displays the translated images generated by our transport map. DIOTM successfully generates the target distributions (Cat and Female images) while preserving the identity of the input source images. In practice, our DIOTM model trains two transport maps in both directions $\overrightarrow{T}_\theta$ and $\overleftarrow{T}_\theta$. The results for the reverse image-to-image translation are included in Appendix \ref{appen:additional}.
Tab. \ref{tab:I2I} provides the quantitative evaluation results. We adopted the FID score \citep{fid} for quantitative comparison. The FID score assesses whether each model accurately generates the target distribution. 
As shown in Tab. \ref{tab:I2I}, the DIOTM model demonstrates state-of-the-art results among existing (entropic) OT-based methods on I2I translation benchmarks. 
Specifically, our model significantly outperforms other entropic OT-based models with a FID score of 7.40 in the higher resolution case of Male $\rightarrow$ Female ($128 \times 128$). While OTM achieved comparable results at a specific hyperparameter of $\lambda=10$, OTM diverges for all other $\lambda \in \{50, 100\} $ with $\text{FID} > 60$ (Fig. \ref{fig:abl_lambda}). In contrast, our DIOTM consistently maintains a stable performance of $\text{FID} < 10$ for $\lambda \in \{50, 100\}$ (Fig. \ref{fig:abl_lambda}).
In summary, DIOTM exhibits superior scalability in handling higher-resolution image datasets compared to previous OT Map models such as NOT and OTM.

\begin{figure}[t]
    \centering
    \begin{minipage}{.52\linewidth}
    \centering
    \setlength\tabcolsep{2.0pt}
    \captionof{table}{
    \textbf{Image-to-Image translation benchmark} results compared to existing neural (entropic) optimal transport models. $\dagger$ indicates the results conducted by ourselves. DSBM scores are taken from \citep{asbm, SB-flow}.
    } 
    \label{tab:I2I}
    \scalebox{.75}{
        \begin{tabular}{c c c}
            \toprule
            Data & Model  &  FID ($\downarrow$) \\
            \midrule 
            \multirow{4}{*}{Male$\rightarrow$Female (64x64)} & CycleGAN \citep{cyclegan} & 12.94 \\
            &  NOT \citep{not} & 11.96 \\
            & OTM$^\dagger$ \citep{fanscalable} &6.42 \\
            & DIOTM$^\dagger$ (Ours) &
            \textbf{5.27} \\ 
            \midrule
            \multirow{3}{*}{Wild$\rightarrow$Cat (64x64)} & DSBM \citep{dsbm} & 20$+$ \\
            & OTM$^\dagger$ \citep{fanscalable} & 12.42
            \\
            & DIOTM$^\dagger$ (Ours) & \textbf{10.72}
            \\
            
            \midrule
            \multirow{4}{*}{Male$\rightarrow$Female (128x128)} & DSBM \citep{dsbm}
            & 37.8 \\
            & ASBM \citep{asbm} & 16.08 \\
            & OTM$^\dagger$ \citep{fanscalable} & 7.55  \\
            & DIOTM$^\dagger$ (Ours) & \textbf{7.40} \\
            \bottomrule
        \end{tabular}}
    \end{minipage}
    \begin{minipage}{.47\linewidth}
        \centering
        \begin{subfigure}[b]{1\textwidth}
            \includegraphics[width=\textwidth]{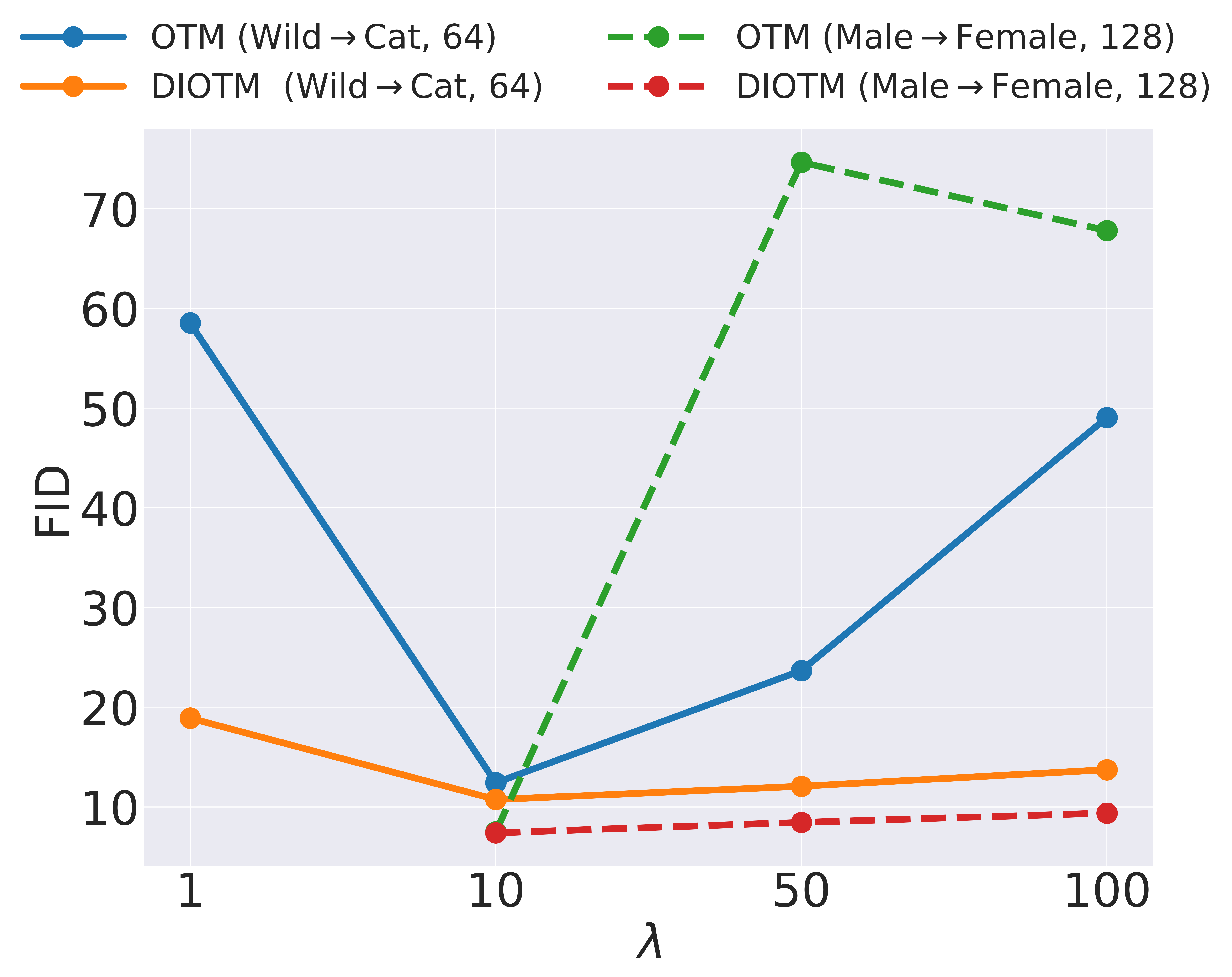}
        \end{subfigure}
        \caption{
        \textbf{Ablation study on the regularizer hyperparameter $\lambda$.}
        }
        \label{fig:abl_lambda}
    \end{minipage}
    \\
    \vspace{5pt}
    \begin{minipage}{1\linewidth}
        \centering
            \begin{subfigure}[b]{1\textwidth}
                \includegraphics[width=\textwidth]{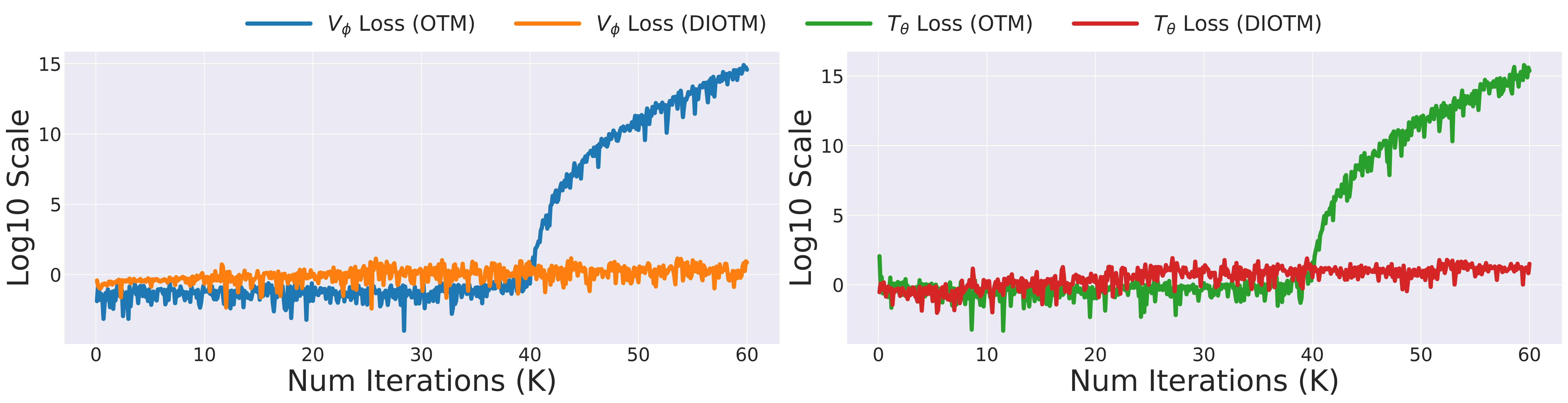}
            \end{subfigure}
            \caption{
            \textbf{Visualization of the stable training dynamics of DIOTM} on Wild$\rightarrow$Cat $(64 \times 64)$. The loss values are visualized on a $\log_{10}$ scale.
            }
            \label{fig:training_dynamics}
            \vspace{10pt}
    \end{minipage}
    \\
    \begin{minipage}{1\linewidth}
        \centering  
        \captionof{table}{
        \textbf{Comparison of our HJB regularizer with the OTM and R1 regularizers} on the DIOTM model. Our HJB regularizer exhibits superior performance and stability to $\lambda$.
        } \label{tab:comparison_reg}
        \scalebox{0.8}{
        \begin{tabular}{cccccccccccccc}
        \toprule
        &Model &        \multicolumn{4}{c}{G$\rightarrow$8G}       &  \multicolumn{4}{c}{G$\rightarrow$25G} & \multicolumn{4}{c}{Moon$\rightarrow$Spiral} \\
        \cmidrule(lr){3-6}  \cmidrule(lr){7-10} \cmidrule(lr){11-14} 
        & $\lambda$  & 0.1 & 0.2 & 1.0 & 10 & 0.1 & 0.2 & 1.0 & 10 & 0.1 & 0.2 & 1.0 & 10  \\
        \midrule
        \multirow{3}{*}{$W_{2}$ ($\downarrow$)}  &OTM &  22.08 & 22.90  &  DIV
        &   31.35 &  68.01 & 89.62 & DIV
        & 81.02 & 19.99 & 14.19  
        & 15.66 & 33.80
        \\
        & R1  &  3.59 & 5.01 & 3.29  &  4.42 & 9.20 & \textbf{9.94} &  11.78  & DIV & 1.91 & 2.08 & 1.05 &  2.74  \\
        & HJB & \textbf{1.93} & \textbf{2.69} & \textbf{2.92} &  \textbf{3.21} & \textbf{7.19 }& 14.64 &   \textbf{7.99} &\textbf{12.38} & \textbf{0.54} & \textbf{0.59} & \textbf{0.30} &   \textbf{1.31} \\
        \midrule   
        \multirow{3}{*}{L2 ($\downarrow$)}  &OTM & 27.41 & 28.21 & DIV
        & 34.47 &  96.89  & 97.98 & DIV
        & 87.05 &   20.96 & 15.01 & 34.31 & 33.80    
        \\
        & R1  & 4.49 & 5.39 &  3.87 & 5.14  & 86.05 &  17.64  &19.52   & DIV & 2.88 & 3.56  & 2.36  & 3.74       \\
        & HJB & \textbf{3.05} & \textbf{3.44} &\textbf{ 3.36} &\textbf{3.98}  &  \textbf{16.51} & \textbf{15.82}  & \textbf{11.11} &  \textbf{15.64} & \textbf{1.42} &  \textbf{2.25} &  \textbf{1.13} &  \textbf{2.27}  \\
        \bottomrule
        \end{tabular}}
    \end{minipage}
    \vspace{-10pt}
\end{figure}

\vspace{-5pt}
\subsection{Further Analysis} \label{sec:ablation}
In this subsection, we provide an in-depth analysis of our DIOTM model. Specifically, we demonstrate the stable training dynamics of DIOTM, compare the HJB regularizer $\mathcal{R}(V_\phi)$ with various regularizers introduced in other neural optimal transport models, and conduct an ablation study on the regularization hyperparameter $\lambda$.

\paragraph{Stable Training Dynamics}
The previous approaches to neural optimal transport with adversarial learning often suffer from training instability \citep{uotmsd}. These models tend to diverge after long training and are sensitive to hyperparameters. In contrast, our DIOTM offers stable training dynamics. Fig. \ref{fig:training_dynamics} visualizes the loss values for the transport map $T_{\theta}$ and the value function $V_{\phi}$ throughout the training process. Note that we visualized these loss values on a $\log_{10}$ scale. In OTM \citep{otm}, the $T_{\theta}$ loss gradually explodes as training progresses. 
This unstable training dynamics has been a major challenge for the OT Map models based on minimax objectives. On the contrary, DIOTM exhibits stable loss dynamics without the abrupt divergence phenomenon, observed in OTM.

\paragraph{Comparison to Various Regularization Methods}
We investigate the effect of our HJB regularizer $\mathcal{R}(V_\phi)$ (Eq. \ref{eq:hjb_reg}) compared to other regularization methods. Specifically, we compare two alternatives: OTM regularizer $\mathcal{R}_{\text{OTM}}$ \citep{fanscalable, otm} and R1 regularizer $\mathcal{R}_{\text{R1}}$ \citep{r1_reg}. These regularizers are also introduced to stabilize the training value function $V_\phi$. We incorporate these regularizers into our algorithm by modifying the regularization term $\mathcal{R}$ in line 4 of Alg. \ref{alg:bi-otm} as follows:
\begin{itemize}[topsep=-1pt, itemsep=0pt]
    \item $\mathcal{R}_{\text{OTM}}(t, x_t, y_t) = \Vert \nabla_{y} \left(c_{0,t}(x, x_t) - V_\phi(t, x_t)\right)\Vert + \Vert \nabla_{y} \left(c_{t,1}(y, y_t) + V_\phi(t, y_t)\right)\Vert$.
    \item $\mathcal{R}_{\text{R1}}(t, x_t, y_t) = \Vert \nabla_{y}  V_\phi(t, x_t) \Vert^2 + \Vert \nabla_{y} V_\phi(t, y_t) \Vert^2$.
\end{itemize}
We assessed these regularizers on synthetic datasets to measure the accuracy of each neural optimal transport map, as in Tab. \ref{tab:toy}. Tab. \ref{tab:comparison_reg} provides the results of each regularization method. Note that because we are comparing different regularizers, a direct comparison under the same $\lambda$ is not meaningful. Instead, we need to focus on the best results of each regularizer and their robustness to the regularization intensity parameter $\lambda$. In Tab. \ref{tab:comparison_reg}, our HJB regularizer attains the best and stable results on all three synthetic datasets. 
We interpret this result by focusing that the HJB regularizer is the only regularizer that incorporates the time derivative $\partial_{t} V_{\phi}$. Our time-dependent value function $V_{\phi}(t, x)$ is trained to distinguish the displacement interpolation for each $t$. Therefore, regularizing the behavior of our value function across time $t$ is beneficial for DIOTM.


\paragraph{Effect of the Regularization Hyperparameter $\lambda$}
Finally, we conducted an ablation study on the regularization hyperparameter $\lambda$ (Eq. \ref{eq:total_objective}) in the image-to-image translation tasks of Wild $\rightarrow$ Cat ($64 \times 64$) and Male $\rightarrow$ Female ($128 \times 128$). Note that, unlike Tab. \ref{tab:comparison_reg}, we compare our model with OTM \citep{otm}, not the DIOTM model with the OTM regularizer. Fig. \ref{fig:abl_lambda} demonstrates that our model exhibits significantly greater stability regarding the regularization hyperparameter $\lambda$, in comparison to OTM. Specifically, in Wild $\rightarrow$ Cat ($64 \times 64$), our model maintains decent performance for $\lambda \in \{1, 50, 100 \}$, showing the FID score around 20 even in the worst case. In contrast, the FID scores of OTM fluctuate severely from below $20$ to around $60$ depending on $\lambda$.




\section{Conclusion}
In this paper, we introduced the Displacement Interpolation Optimal Transport Model (DIOTM), a neural optimal transport method based on displacement interpolation. Our method is motivated by the equivalence between displacement interpolation and dynamic optimal transport. We derived the dual formulation of displacement interpolant and developed a method that utilizes the entire trajectory of displacement interpolation to improve neural optimal transport learning. Our experiments demonstrated that DIOTM achieves more accurate and stable optimal transport map estimation compared to previous method. 
A major limitation of this work is the requirement to train both bidirectional transport maps $\overrightarrow{T}$ and $\overleftarrow{T}$. In image-to-image translation tasks, both transport maps are meaningful, e.g. Male $\leftrightarrow$ Female. However, in generative modeling, the reverse transport map $\overleftarrow{T}$ from the data to the prior noise distribution is not always necessary. In these cases, training $\overleftarrow{T}$ can be an unnecessary cost. 
Another limitation of this work is that our approach is limited to the quadratic cost. This is because our displacement interpolation parametrization in Eq. \ref{eq:di_generators_param} is only valid under the quadratic cost assumption.

\section*{Acknowledgements}
Jaemoo was supported by the National Research Foundation of Korea(NRF) grant funded by the Korea government(KSIT) [RS-2024-00410661] and [RS-2024-00342044]. YC is supported by the National Science Foundation under Award No. DMS-2206576. Jaewoong was partially supported by the National Research Foundation of Korea(NRF) grant funded by the Korea government(MSIT) [RS-2024-00349646]. We thank the Center for Advanced Computation in KIAS for providing computing resources.

\paragraph{Ethics Statement}
In this paper, we aim to advance the field of machine learning by developing a relatively stable and high-performing algorithm. We hope that further investigation will enable our methodology to be applied as a stabilized algorithm across various machine learning applications. By providing a robust framework, we anticipate that our approach will contribute to improving the reliability and effectiveness of machine learning solutions in diverse contexts.

\paragraph{Reproducibility Statement}
To ensure the reproducibility of our work, we submitted the anonymized source in the supplementary material, provided complete proofs of our theoretical results in Appendix \ref{appen:proofs}, and included the implementation and experiment details in Appendix \ref{appen:implementation}.

\bibliography{iclr2025_conference}
\bibliographystyle{iclr2025_conference}

\newpage
\appendix
\appendix

\section{Proofs} \label{appen:proofs}

\paragraph{Assumptions} Let $\mathcal{X}\in \mathbb{R}^d$ be a closure of a connected bounded convex open subspace of $\mathbb{R}^d$. Let $\text{dim}(\partial \mathcal{X}) < d$. Let $\mathcal{P}_{2, ac}$ be a collection of probability distributions on $\mathcal{X}$ that are absolutely continuous and have finite second moments. Let $\mu, \nu \in \mathcal{P}_{2, ac}$.
Moreover, let $c(x,y)=\tau \lVert x-y \rVert^2$.
Here, $\tau$ is a given positive constant.

\subsection{Proof of Theorem \ref{thm:dual-form}}

\begin{theorem} \label{thm:dual-form-appen}
Given the assumptions in Appendix \ref{appen:proofs}, for a given $t \in (0,1)$, the minimization problem $\inf_{\rho} \mathcal{L}_{DI} (t, \rho)$ (Eq. \ref{eq:displacement_opt}) is equivalent to the following dual problem:
\begin{equation} \label{eq:barycenter-appen}
     \sup_{f_{1, t}, f_{2, t} \, \, \textrm{with} \,\, (1-t)f_{1, t} + t f_{2, t} = 0} \left[ (1-t) \int_{\mathcal{X}}{f^c_{1,t}(x)d\mu(x)} + t \int_{\mathcal{X}}{f^c_{2,t}(y)d\nu(y)} \right].
\end{equation}
where the supremum is taken over two potential functions $f_{1, t} : \mathcal{Y} \rightarrow \mathbb{R}$ and $f_{2,t} : \mathcal{X} \rightarrow \mathbb{R}$, which satisfy $(1-t) f_{1, t} + t f_{2, t} = 0$.
Note that the assumptions in Appendix \ref{appen:proofs} guarantee the existence and uniqueness of displacement interpolants $\rho_{t}^{\star}$, the forward optimal transport map $\overrightarrow{T}^{\star}_t$ from $\mu$ to $\rho^\star_t$, and the backward transport map $\overleftarrow{T}^{\star}_t$ from $\nu$ to $\rho^\star_t$. Based on this, we have the following:
\begin{align} \label{eq:transference-plan-appen}
    \overrightarrow{T}^{\star
    }_t(x) \in {\rm{arginf}}_{y\in \mathcal{Y}} \left[ c(x,y) - f_{1, t}^{\star}(y) \right] \quad \mu\text{-a.s.}. \\
    \overleftarrow{T}^{\star}_t(x) \in {\rm{arginf}}_{x\in \mathcal{X}} \left[ c(x,y) - f_{2, t}^{\star}(x) \right] \quad \nu\text{-a.s.}.
\end{align}
\end{theorem}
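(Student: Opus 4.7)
\noindent The plan is to recognize $\inf_\rho \mathcal{L}_{DI}(t,\rho)$ as a two-marginal Wasserstein barycenter problem with weights $(1-t)$ and $t$ and derive its dual by applying the Kantorovich semi-duality (\ref{eq:kantorovich-semi-dual}) to each $W_2^2$ term, followed by an inf-sup interchange. Using the symmetry of the quadratic cost to place both potentials on the common intermediate space $\mathcal{X}=\mathcal{Y}$, one obtains
\begin{equation*}
\mathcal{L}_{DI}(t,\rho) = \sup_{f_{1,t}, f_{2,t}} \left[ (1-t)\int f_{1,t}^c\, d\mu + t\int f_{2,t}^c\, d\nu + \int \big((1-t)f_{1,t} + t f_{2,t}\big)\, d\rho \right].
\end{equation*}
Taking $\inf_\rho$ and swapping with $\sup$, the inner problem is $\inf_{\rho\in\mathcal{P}(\mathcal{X})} \int h\, d\rho$ with $h = (1-t)f_{1,t} + tf_{2,t}$. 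Since $\rho$ ranges over probability measures, this infimum equals $\min_x h(x)$ and is $-\infty$ unless $h$ is bounded below. After a harmless additive normalization of the potentials, the effective admissible set reduces to $\{(f_{1,t}, f_{2,t}) : (1-t)f_{1,t} + t f_{2,t} = 0\}$ and the inner inf contributes zero, producing exactly the dual in (\ref{eq:barycenter-appen}).

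\noindent The swap yields weak duality ($\geq$) immediately. For equality I would construct explicit optimizers from the displacement interpolation. Under the assumptions, Brenier's theorem provides unique optimal transport maps $\overrightarrow{T}^\star_t$ from $\mu$ to $\rho^\star_t$ and $\overleftarrow{T}^\star_t$ from $\nu$ to $\rho^\star_t$, together with associated Kantorovich potentials $\overrightarrow{V}_t^\star$ and $\overleftarrow{V}_t^\star$ on $\mathcal{X}$. Because both potentials arise from the common Brenier potential $\varphi$ for $(\mu,\nu)$, evaluated along the displacement lines $x \mapsto x + t(T^\star(x)-x)$ through the Benamou-Brenier velocity field, a direct computation based on $\rho^\star_t = [(1-t)\mathrm{Id} + t T^\star]_\#\mu$ shows they can be chosen so that $(1-t)\overrightarrow{V}_t^\star + t\overleftarrow{V}_t^\star = 0$ on all of $\mathcal{X}$ up to additive constants. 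Setting $f_{1,t}^\star = \overrightarrow{V}_t^\star$ and $f_{2,t}^\star = \overleftarrow{V}_t^\star$ gives a feasible dual pair whose objective matches the primal value. The transport-map identities (\ref{eq:transference-plan-appen}) then follow from Brenier's theorem applied to each $c$-transform: for the quadratic cost, the infimum in $V^c(x) = \inf_y [c(x,y) - V(y)]$ is uniquely attained at the optimal transport destination, which is $\overrightarrow{T}^\star_t(x)$ for $f_{1,t}^\star$ and $\overleftarrow{T}^\star_t(y)$ for $f_{2,t}^\star$.

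\noindent The main obstacle will be justifying the strong duality rigorously, since the potentials live in a non-compact function space and a blanket minimax theorem is awkward to apply directly. The constructive route above sidesteps this by exhibiting a dual-feasible pair whose objective equals $\mathcal{L}_{DI}(t,\rho^\star_t)$, turning the equality into a verification rather than an abstract existence argument. A secondary subtlety is extending the pointwise constraint $(1-t)f_{1,t}+tf_{2,t}=0$ from the $\rho^\star_t$-essential set to all of $\mathcal{X}$; this is handled by replacing each potential with its $c$-concave envelope, which preserves both Kantorovich optimality and the linear relation, thanks to the boundedness and convexity of $\mathcal{X}$ together with the regularity of the quadratic cost.
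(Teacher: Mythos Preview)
Your overall strategy is sound and parallels the paper's at the start: apply Kantorovich duality to each $W_2^2$ term and exchange $\inf_\rho$ with $\sup_{f_1,f_2}$. The main divergence is in how strong duality is established. The paper invokes Sion's minimax theorem directly, using the compactness of $\mathcal{X}$ that is built into the standing assumptions, and then performs an explicit replacement $\tilde f_{1,t}=(m-tf_{2,t})/(1-t)$ with $m=\inf_z[(1-t)f_{1,t}+tf_{2,t}]$ to reduce to the constraint $(1-t)f_{1,t}+tf_{2,t}=0$; for the transport-map identities it argues that the constrained optimizer is also an optimizer of the decoupled two-potential problem at $\rho_t^\star$, hence a Kantorovich potential for each marginal problem, and then applies Brenier. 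You instead take only weak duality from the swap and close the gap by constructing a dual-feasible pair from the individual Kantorovich potentials of $(\mu,\rho_t^\star)$ and $(\nu,\rho_t^\star)$.

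Your route is legitimate but there is one step that carries real weight and is not yet justified: the claim that the two Kantorovich potentials can be chosen so that $(1-t)\overrightarrow V_t^\star+t\overleftarrow V_t^\star=0$ \emph{on all of $\mathcal{X}$}, not merely on $\operatorname{supp}\rho_t^\star$. The gradient computation along displacement lines only gives the relation on the support, and your proposed fix---pass to $c$-concave envelopes and assert they preserve the linear relation---is exactly the nontrivial point. What makes it work is the Hopf--Lax semigroup structure of the quadratic cost (define $V_t$ by forward Hopf--Lax from the $\mu$-side potential and check that $-(1-t)V_t$ coincides with the backward Hopf--Lax from the $\nu$-side), which is precisely the content the paper isolates in its second theorem. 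So your approach effectively front-loads that analysis into the duality proof, whereas the paper sidesteps it via Sion and then proves the Hopf--Lax relation separately. Either organization is fine, but you should be aware that the ``direct computation'' you allude to is where the substance lies.
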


\begin{proof}
    We prove the theorem by two steps. First, we prove Eq. \ref{eq:barycenter-appen} following the duality theorem in \citet{barycenter_win} and \citet{barycenter_not}. Then, we discuss the existence and uniqueness of the potential $f_1, f_2$ and the optimal transport maps. Using these uniquesness, we prove Eq. \ref{eq:transference-plan-appen}.
    
    \paragraph{Step 1.} We first prove Eq. \ref{eq:barycenter-appen}. Suppose $0 < t < 1$ is given.
        By applying the dual form of OT problem \cite{Kantorovich1948}, for every $\rho \in \mathcal{P}_2$, the following equation satisfies:
    \begin{multline} \label{eq:first-dual-appen}
        (1-t) W^2_2 (\mu, \rho) + t W^2_2 (\rho, \nu) \\ = \sup_{f_1, f_2 \in \mathcal{C}(\mathcal{X})} \left[ (1-t)\mathbb{E}_{x\sim \mu} [f^c_{1,t} (x)] + t\mathbb{E}_{y\sim \nu} [f^c_{2,t} (y)] + \mathbb{E}_{z\in \rho }[(1-t) f_{1,t} (z) + t f_{2,t} (z)]  \right].
    \end{multline}
    Then, by applying Theorem 3.4 in \citet{sion}, we obtain the following:
    \begin{align} \label{eq:swap-appen}
        \begin{split}
            &\mathcal{L}^* =\inf_{\rho} (1-t) W^2_2 (\mu, \rho) + t W^2_2 (\rho, \nu), \\    
            = &\inf_{\rho} \sup_{f_{1,t}, f_{2,t} \in \mathcal{C}(\mathcal{X})} \left[ (1-t) \mathbb{E}_{x\sim \mu} [f^c_{1,t} (x)] + t \mathbb{E}_{y\sim \nu} [f^c_{2,t} (y)] + \mathbb{E}_{z\in \rho }[(1-t) f_{1,t} (z) + t f_{2,t} (z)]  \right], \\
            =& \sup_{f_{1,t}, f_{2,t} \in \mathcal{C}(\mathcal{X})} \inf_{\rho} \left[ (1-t) \mathbb{E}_{x\sim \mu} [f^c_{1,t} (x)] + t \mathbb{E}_{y\sim \nu} [f^c_{2,t} (y)] + \mathbb{E}_{z\sim \rho }[(1-t) f_{1,t} (z) + t f_{2,t} (z)]  \right].
        \end{split}
    \end{align}
    Note that we can swap minimax to max-min problem due to the compactness assumption of the space $\mathcal{X}$. Now, suppose $m = m(f_{1,t}, f_{2,t}) := \inf_{z\in \mathcal{X}} (1-t)f_{1,t} (z) + t f_{2,t} (z)$. Let $\Tilde{f}_{1,t} = (m(f_{1,t}, f_{2,t}) - t f_{2,t}) / (1-t)  $. Then, $m = (1-t) \Tilde{f}_{1,t} (z) + t f_{2,t} (z) \leq (1-t) f_{1,t} (z) + t f_{2,t} (z) $, which implies $\Tilde{f}_{1,t} \le f_{1,t}$. Then, we can easily obtain $f^c_{1,t} \le \Tilde{f}^c_{1,t}$. With this inequality, we can obtain the following equality: 
    \begin{align}
        \begin{split}
            & \sup_{f_{1,t}, f_{2,t} \in \mathcal{C}(\mathcal{X})} \inf_{\rho} \left[ (1-t) \mathbb{E}_{x\sim \mu} [f^c_{1,t} (x)] + t \mathbb{E}_{y\sim \nu} [f^c_{2,t} (y)] + \mathbb{E}_{z\sim \rho }[(1-t) f_{1,t} (z) + t f_{2,t} (z)]  \right], \\
            = & \sup_{f_{1,t}, f_{2,t} \in \mathcal{C}(\mathcal{X})} \left[ (1-t) \mathbb{E}_{x\sim \mu} [f^c_{1,t} (x)] + t \mathbb{E}_{y\sim \nu} [f^c_{2,t} (y)] + m(f_{1,t}, f_{2,t})  \right], \\
            = & \sup_{ f_{2,t} \in \mathcal{C}(\mathcal{X})} \left[ (1-t) \mathbb{E}_{x\sim \mu} [\Tilde{f}^c_{1,t} (x)] + t \mathbb{E}_{y\sim \nu} [f^c_{2,t} (y)] + m(\Tilde{f}_{1,t}, f_{2,t})  \right], \\
            = & \sup_{ f_{2,t} \in \mathcal{C}(\mathcal{X})} \left[ (1-t) \mathbb{E}_{x\sim \mu} \left[\left( -\frac{t f_{2,t}}{1-t} \right)^c (x)\right] + t \mathbb{E}_{y\sim \nu} [f^c_{2,t} (y)] \right].
        \end{split}
    \end{align}
    Note that the last equation is obtained by using the following property: $(f_{1,t} + a)^c = f^c_{1,t} -a$ for any constant $a$.
    By letting $f_{1,t} = -\frac{t f_{2,t}}{1-t}$, we finally obtain Eq. \ref{eq:barycenter-appen}.

    \paragraph{Step 2.} In this step, we prove the uniqueness of the optimal potential pair $(f^\star_{1,t}, f^\star_{2,t})$ of Eq. \ref{eq:barycenter-appen}.
    Let 
    \begin{equation} \label{eq:displacement3-appen}
        \rho^\star = \arg\inf_\rho (1-t)W^2_2 (\mu, \rho) + t W^2_2 (\rho, \nu).
    \end{equation}
    First, we should show the well-definedness of above equation. In other words, we should show the existence and uniqueness of $\rho^\star$.
    Due to the absolutely continuity assumption of $\mu$, there exists unique deterministic optimal transport map between $\mu$ and $\nu$ (See Chapter 1 or Theorem 9.2 of \citet{villani}). Thus, by applying Corollary 7.23 in \citet{villani}, there exist unique solution $\rho^\star$.
    
    Now, consider the following optimization problem:
    \begin{multline} \label{eq:dual-optimality-appen}
        \sup_{\hat{f}_{1,t}, \hat{f}_{2,t}} \bigg[ (1-t) \left( \int_{\mathcal{X}} \hat{f}^c_{1,t} (x) d\mu(x) +  \int_{\mathcal{X}} \hat{f}_{1,t} (x) d\rho^\star(x)\right) \\ + t \left(  \int_{\mathcal{X}} \hat{f}^c_{2,t} (y) d\nu(y) + \int_{\mathcal{X}} \hat{f}_{2,t} (y) d\rho^\star(y) \right) \bigg].
    \end{multline}
    Trivially, the first and the second term of Eq. \ref{eq:dual-optimality-appen} break down into two independent optimization problems. By using the Kantorovich duality theorem (See \cite{Kantorovich1948} or Theorem 5.10 \cite{villani}), the solution value of Eq. \ref{eq:dual-optimality-appen} is obviously $\mathcal{L}^*$.
    Moreover, since the solution pair $(f^\star_{1,t}, f^\star_{2,t})$ of Eq. \ref{eq:barycenter-appen} satisfies $(1-t) f^\star_{1,t} + t f^\star_{2,t} = 0$, it is included in the optimal potential pair of Eq. \ref{eq:dual-optimality-appen}.

    Note that we assumed that $\mu$ is absolutely continuous and the space $\mathcal{X}$ is convex, and a closure of a connected open set. Then, on the support of $\mu$, the Kantorovich potentials $\hat{f}_{1,t}$ and $\hat{f}_{2,t}$ is unique up to constant on the connected support of $\mu$ and $\nu$, respectively (See \cite{kantorovich-unique}).
    Therefore, the optimal potentials $f^\star_{1,t}, f^\star_{2,t}$ of Eq. \ref{eq:barycenter-appen} are a Kantorovich potentials of $W^2_2 (\mu, \rho^\star)$ and $W^2_2 (\nu, \rho^\star)$, respectively.

    Finally, since $\mu \in \mathcal{P}_{2, ac}$, there exists a measurable deterministic optimal transport map $T^\star$ which transport $\mu$ to $\rho^\star$ (See Chapter 1 or Corollary 9.4 in \cite{villani}). Then, by Theorem 5.10 and Remark 5.13 in \citet{villani}, we can easily obtain Eq. \ref{eq:transference-plan-appen}.
\end{proof}

Let $f_{1,t}(x) = t V(t,x)$ and $f_{1,t}(x) = - (1-t) V(t,x)$ for every $t\in (0,1)$. Then, we can rewrite Eq. \ref{eq:barycenter} as follows:
\begin{equation} \label{eq:potential-dual-appen}
     \sup_{V_t} \left[ \int_{\mathcal{X}}{V^{c_{0,t}}_t(x)d\mu(x)} +  \int_{\mathcal{X}}{(-V_t)^{c_{t,1}} (y)d\nu(y)} \right],
\end{equation}
where $c_{s,t} (x,y) = \frac{\tau \lVert x-y \rVert^2}{t - s}$ for every $0\le s < t \le 1$.

The following theorem shows the connection between the optimal potential of transport problem between $\mu$ to $\rho^\star_t$, and our potential function $V$.
\begin{theorem} \label{thm:hjb-appen}
The optimal $V^\star_t$ in Eq. \ref{eq:potential-dual} satisfies the following:
\begin{equation} \label{eq:potential-dual2-appen}
    V^\star_t = \arg \sup_{V_t} \left[ \int_{\mathcal{X}}{V^{c_{0,t}}_t(x)d\mu(x)} + \int_{\mathcal{X}}{V_t(x)d\rho^\star_t(x)} \right], 
\end{equation}
up to constant $\rho^\star$-a.s.. Moreover, there exists $\{ V^\star_t \}_{0\le t\le 1}$ that satisfies Hamilton-Jacobi-Bellman (HJB) equation, i.e.
\begin{equation} \label{eq:optimal_hjb-appen}
    \partial_t V_t + \frac{1}{4\tau} \Vert \nabla V_t \Vert^2 = 0, \ \ \rho^\star \text{-a.s.}
\end{equation}
\end{theorem}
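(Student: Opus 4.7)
The plan is to establish the two claims sequentially, reducing both to results from Kantorovich duality and the Benamou--Brenier theory of dynamic OT, after the substitution to the single potential $V_t$ given by Corollary~\ref{cor:dual-form-simple}.

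For the first claim (Eq.~\ref{eq:potential-dual2}), I would start from the dual in Corollary~\ref{cor:dual-form-simple} and invoke Step 2 of the proof of Theorem~\ref{thm:dual-form}, which showed that the optimal pair $(f_{1,t}^\star, f_{2,t}^\star)$ coincides with Kantorovich potentials for the transport problems $W_2^2(\mu,\rho^\star_t)$ and $W_2^2(\nu,\rho^\star_t)$, respectively. Under the substitution $f_{1,t}^\star = t V^\star_t$ and the standard rescaling of Kantorovich potentials by the multiplicative factor $\alpha/t$ between the quadratic cost $\|x-y\|^2$ and $c_{0,t}(x,y)=\alpha\|x-y\|^2/t$, $V^\star_t$ is identified (up to constant) with the Kantorovich potential on the $\rho^\star_t$-side for the cost-$c_{0,t}$ transport from $\mu$ to $\rho^\star_t$. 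This is precisely what Eq.~\ref{eq:potential-dual2} characterizes via its semi-dual. Uniqueness of $V^\star_t$ up to a constant $\rho^\star$-a.s.\ follows from the uniqueness of Kantorovich potentials on connected supports (already invoked in Step 2 of Theorem~\ref{thm:dual-form}), since $\rho^\star_t$ is absolutely continuous and supported on a connected set thanks to Brenier's theorem and the convexity of $\mathcal{X}$.

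For the HJB claim, the plan is to exhibit an explicit family $\{V^\star_t\}$ via the Benamou--Brenier dual. The dynamic OT with action Lagrangian $L(v)=\alpha\|v\|^2$ has Legendre conjugate $H(p)=\|p\|^2/(4\alpha)$, and its dual maximizes $\int \Phi(1,y)\,d\nu - \int \Phi(0,x)\,d\mu$ over Hamilton--Jacobi subsolutions $\partial_t \Phi + \|\nabla\Phi\|^2/(4\alpha) \le 0$. A maximizer $\Phi^\star$ saturates HJB $\rho^\star$-a.s., which can be seen by a characteristic argument: integrating $L$ along a displacement trajectory $x(t)=(1-t)x_0+tT^\star(x_0)$ yields $\Phi^\star(t,x(t)) - \Phi^\star(0,x_0) = \alpha t \|T^\star(x_0)-x_0\|^2$, and matching this with the subsolution inequality forces equality in HJB on the space--time support of $\rho^\star$. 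The Hopf--Lax identity $\Phi^\star(t,y)=\inf_{x}[\Phi^\star(0,x)+\alpha\|x-y\|^2/t]$ then exhibits $(-\Phi^\star(0,\cdot), \Phi^\star(t,\cdot))$ as a $c_{0,t}$-conjugate Kantorovich pair from $\mu$ to $\rho^\star_t$, so $\Phi^\star(t,\cdot)$ maximizes Eq.~\ref{eq:potential-dual2}. By the uniqueness from the first claim, we can take $V^\star_t \equiv \Phi^\star(t,\cdot)$ up to a constant, yielding a family that satisfies HJB by construction.

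The main obstacle is the careful bookkeeping of scaling constants and sign conventions when translating between the paper's $c$-transform, the rescaled cost $c_{0,t}$, and the Hopf--Lax semigroup. A subtler technical point is justifying HJB as an \emph{equality} rather than merely an inequality on $\mathrm{supp}\,\rho^\star$, which requires differentiability of $\Phi^\star$ on the support; this is handled via standard regularity theory for Brenier potentials on convex domains, or alternatively via viscosity-solution arguments for the Hamilton--Jacobi equation.
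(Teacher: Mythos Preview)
Your proposal is correct. For the first claim, your argument is essentially identical to the paper's: both invoke Step~2 of Theorem~\ref{thm:dual-form} to identify $V^\star_t$ (via $f_{1,t}^\star = tV^\star_t$) with the Kantorovich potential for $W_2^2(\mu,\rho^\star_t)$ under the rescaled cost $c_{0,t}$, and both appeal to the same uniqueness result for potentials on connected supports.

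For the HJB claim, your route differs in organization from the paper's. The paper cites Theorem~7.36 and Remark~7.37 of Villani directly to obtain a family $\{V^\star_t\}$ satisfying the Hopf--Lax semigroup relation $-V_s(x) = \inf_y[c_{s,t}(x,y) - V_t(y)]$ for all $0<s<t<1$, and then differentiates this in $t\to s$ to extract HJB. You instead construct the candidate $\Phi^\star$ as a maximizer of the Benamou--Brenier dual over HJ subsolutions, argue saturation of the HJB inequality along characteristics, and then identify $\Phi^\star(t,\cdot)$ with $V^\star_t$ via Hopf--Lax conjugacy and the uniqueness from part one. Both arguments rest on the same Hopf--Lax structure; the paper's version is shorter because it outsources the existence of the interpolating potential family to Villani, while yours is more self-contained and makes the dynamic-dual interpretation explicit. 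Your stated concern about upgrading the HJB inequality to an equality $\rho^\star$-a.s.\ is handled in the paper simply by noting that $V_t$ is differentiable $\rho^\star_t$-a.s.\ (semiconcavity on a compact convex domain), which is the same regularity you invoke.
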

\begin{proof}
    By the Step 2 of the proof of the Theorem \ref{thm:dual-form-appen}, we have shown that the optimal $(f^\star_1, f^\star_2)$ which solves Eq. \ref{eq:barycenter-appen} is a Kantorovich dual function of $W^2_2 (\mu, \rho^\star)$ and $W^2_2 (\nu, \rho^\star)$, respectively.
    By directly applying this fact, we obtain Eq. \ref{eq:potential-dual2-appen}.

    Now, we prove Eq. \ref{eq:optimal_hjb-appen}, the HJB equation.
    Let $\rho^\star$ be defined as the Step 2 of the proof of Theorem \ref{thm:dual-form-appen}. Then, as discussed in Step 2, there exists unique $\rho^\star$ that satisfies Eq. \ref{eq:displacement3-appen}. Since $\mu$ is absolutely continuous, it is well known that $\rho^\star$ is also absolutely continuous.
    Due to the compactness of the space $\mathcal{X}$, the optimal $V_t$ is differentiable with respect to $x\in\mathcal{X}$ $\rho^\star$-a.s..
    Now, by applying Theorem 7.36 and Remark 7.37 of \cite{villani}, there exists $V:(0,1)\times \mathcal{X}\rightarrow \mathbb{R}$ such that
    \begin{equation}
        - V_s(x) = \inf_y \left( c_{s,t}(x,y) - V_t(y) \right),
    \end{equation}
    for all $0< s < t < 1$. By applying Hopf-Lax formula, we obtain,
    \begin{equation} \label{eq:hopf-lax}
        - V(s, X_s) = \inf_{v:[0,1]\times \mathcal{X}\rightarrow \mathcal{X}} \left[ \int^t_s \tau \lVert v_t \rVert^2 du - V_t(X_t)\right],
    \end{equation}
    where $\dot{X}_t = v_t(X_t)$. Thus, by organizing the Eq. \ref{eq:hopf-lax}, we obtain Eq. \ref{eq:optimal_hjb-appen} as follows:
    \begin{align}
        0 &= \lim_{t\rightarrow s} \  \inf_{v:[0,1]\times \mathcal{X}\rightarrow \mathcal{X}} \left[ \frac{1}{t-s} \int^t_s \tau \lVert v \rVert^2 -  \frac{1}{t-s} \left(V_t(X_t) - V_s(X_s)\right) \right], \\
        &= \inf_v \left( \tau \lVert v_s \rVert^2 - \partial_s V_s - \nabla V_s \cdot v_s \right) = -\left( \partial_s V_s + \frac{\lVert \nabla V_s \rVert^2}{4\tau} \right).
    \end{align}
    Note that $\text{law}(X_t) = \rho^\star_t$. Thus, $\partial_t V_t + \frac{\lVert \nabla V_t \rVert^2}{4\tau} = 0$ $\rho^\star_t$-a.s..
\end{proof}

\section{Implementation Details} \label{appen:implementation}
For all experiments, we parametrize $\overrightarrow{T}_\theta,\overleftarrow{T}_\theta:\mathcal{X}\times \mathbb{R}^n \rightarrow \mathcal{X}$ where $\mathcal{X}$ is a data space, and $z\in \mathbb{R}^n$ is an auxiliary variable. As reported in \cite{uotm, uotmsd}, incorporating the auxiliary variable slightly improves the performance. For all OTM experiments we have implemented, we use the same network and the same parameter to DIOTM, unless otherwise stated.

\subsection{2D Experiments}
\paragraph{Data Description} In this paragraph, we describe our synthetic datasets:
\begin{itemize}
    \item \textbf{8-Gaussian}: For $m_i = 12 \left(\cos{\frac{i}{4}\pi},\sin{\frac{i}{4}\pi}\right)$ where $i=0, 1, \dots, 7$ and $\sigma=0.4$, the distribution is defined as the mixture of $\mathcal{N}(m_i,\sigma^2)$ with an equal probability.
    \item \textbf{25-Gaussian}: For $m_{ij} =  \left(8i,8j\right)$ where $i=-2, -1, \dots, 2$ and $j=-2, -1, \dots, 2$, the distribution is defined as the mixture of $\mathcal{N}(m_{ij},\sigma^2)$ where $\sigma=0.01$.
    \item \textbf{Moon to Spiral}: We follow \citet{uotmsd}.
    \item \textbf{Two Circles}: We first uniformly sample from the circles of radius $8$ and $16$ with the center at origin. Then, we add Gaussian noise with standard deviation of $0.2$.  
\end{itemize}

\paragraph{Network Architectures}
We first describe the value function network $V_\phi(t,x)$. The input $x \in \mathcal{X}$ is embedded using a two-layer MLP with a hidden dimension of 128. The time variable $t$ is embedded using a positional embedding of dimension 128, followed by a two-layer MLP, also with a hidden dimension of 128. These two embeddings are then added and passed through a three-layer MLP. The SiLU activation function is applied to all MLP layers.
Besides, for the transport map networks, we employ the same network to \cite{uotm} with hidden dimension of 128.

\paragraph{Training Hyperparameters}
We set Adam optimizer with $(\beta_1, \beta_2 )=(0, 0.9)$, learning rate of $10^{-4}$ and the number of iteration of 120K.
We set $\alpha=0.1$ and $\lambda = 1$.

\paragraph{Discrete OT Solver}
We used the POT library \cite{pot} to obtain an accurate transport plan $\pi_{pot}$. 
We used 1000 training samples for each dataset in estimating $\pi_{pot}$ to sufficiently reduce the gap between the true continuous measure and the empirical measure.

\subsection{Image Translation}
\paragraph{Training Hyperparameters} We follow the large neural network architecture introduced in \cite{ddgan}.
We use Adam optimizer with $(\beta_1, \beta_2 )=(0, 0.9)$, learning rate of $10^{-4}$, and trained for 60K iterations. We use a cosine scheduler to gradually decrease the learning rate from $10^{-4}$ to $5\times 10^{-5}$. The batch size of 64 and 32 is employed for $64\times64$ and $128\times128$ image datasets, respectively. We use $\alpha=0.001$ for CelebA image dataset, and $\alpha=0.0005$ for AFHQ dataset. We use ema rate of 0.9999 for $64\times64$ image datasets and 0.999 for 128x128 image datasets.

\paragraph{Evaluation Metric}
Regarding the FID computation, we followed the evaluation scheme of \citep{SB-flow} for the Wild→Cat experiments and \citep{not, asbm} for the CelebA experiments for a fair comparison. Specifically, in the Wild→Cat experiments, we generated ten samples for each source test image. Since the source test dataset consists of approximately 500 samples, we generated 5000 generated samples. Then, we computed the FID score with the training target dataset, which also contains 5000 samples. Also, in the CelebA experiment, we computed the FID score using the test target dataset, which includes 12247 samples. We generated the same 12247 samples and compared them with the test target dataset.

\section{Additional Qualitative Results} \label{appen:additional}
In this section, we include some qualitative results on image-to-image translation tasks. We visualize the source images and its transported samples.

\begin{figure*}[h] 
    \centering
    \begin{subfigure}[b]{.49\textwidth}
        \includegraphics[width=\textwidth]{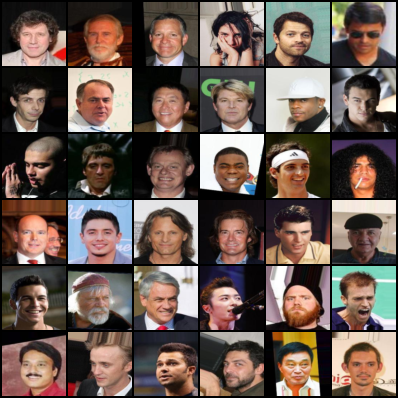}
    \end{subfigure}
    \hfill
    \begin{subfigure}[b]{.49\textwidth}
        \includegraphics[width=\textwidth]{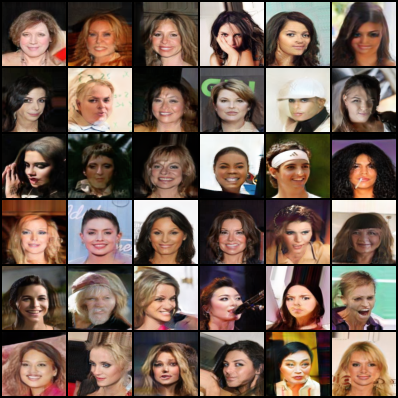}
    \end{subfigure}
    \caption{Unpaired Male $\rightarrow$ Female translation for 64 × 64 CelebA image.}
\end{figure*}

\begin{figure*}[h] 
    \centering
    \begin{subfigure}[b]{.49\textwidth}
        \includegraphics[width=\textwidth]{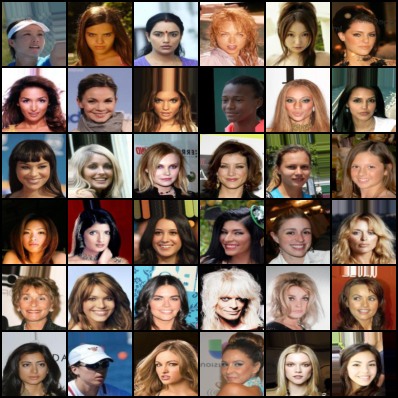}
    \end{subfigure}
    \hfill
    \begin{subfigure}[b]{.49\textwidth}
        \includegraphics[width=\textwidth]{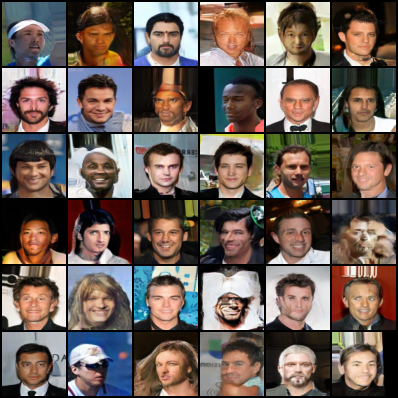}
    \end{subfigure}
    \caption{Unpaired Female $\rightarrow$ Male translation for 64 × 64 CelebA image.}
\end{figure*}

\begin{figure*}[t] 
    \centering
    \begin{subfigure}[b]{.49\textwidth}
        \includegraphics[width=\textwidth]{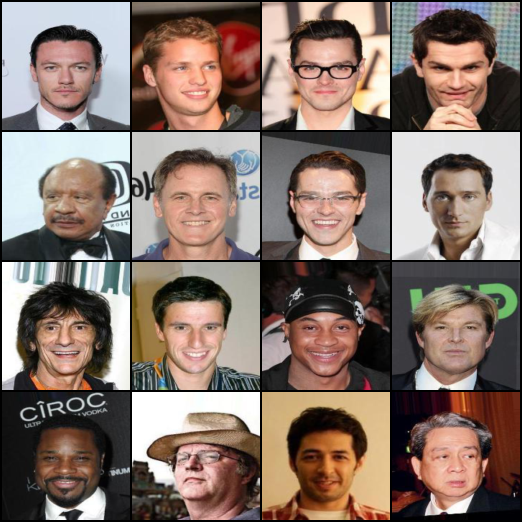}
    \end{subfigure}
    \hfill
    \begin{subfigure}[b]{.49\textwidth}
        \includegraphics[width=\textwidth]{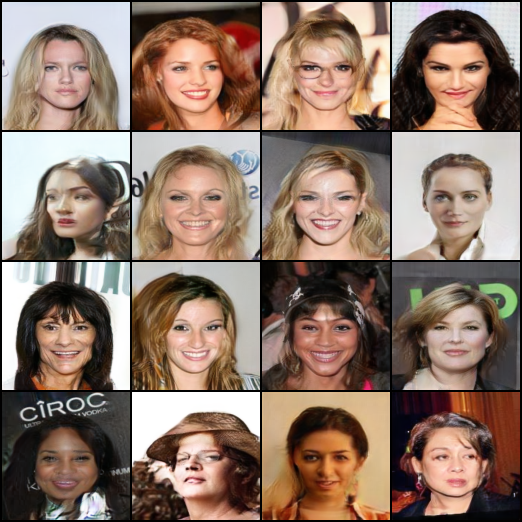}
    \end{subfigure}
    \caption{Unpaired Female $\rightarrow$ Male translation for 128 × 128 CelebA image.}
\end{figure*}

\begin{figure*}[t] 
    \centering
    \begin{subfigure}[b]{.49\textwidth}
        \includegraphics[width=\textwidth]{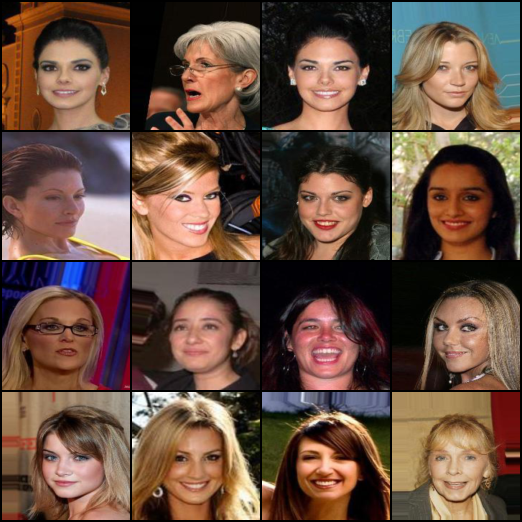}
    \end{subfigure}
    \hfill
    \begin{subfigure}[b]{.49\textwidth}
        \includegraphics[width=\textwidth]{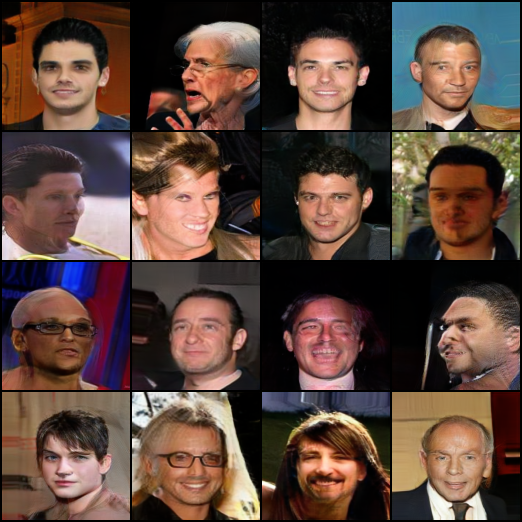}
    \end{subfigure}
    \caption{Unpaired Male $\rightarrow$ Female translation for 128 × 128 AFHQ image.}
\end{figure*}

\begin{figure*}[t] 
    \centering
    \begin{subfigure}[b]{.49\textwidth}
        \includegraphics[width=\textwidth]{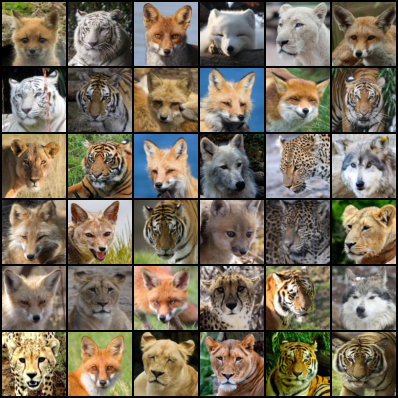}
    \end{subfigure}
    \hfill
    \begin{subfigure}[b]{.49\textwidth}
        \includegraphics[width=\textwidth]{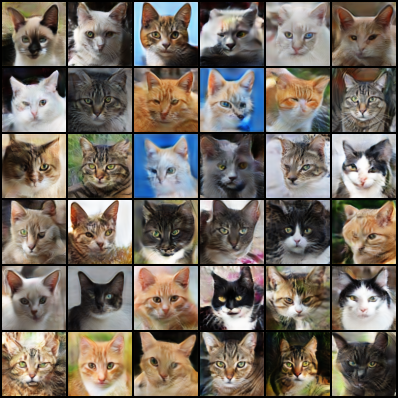}
    \end{subfigure}
    \caption{Unpaired Wild $\rightarrow$ Cat translation for 64 × 64 AFHQ image.}
\end{figure*}

\begin{figure*}[t] 
    \centering
    \begin{subfigure}[b]{.49\textwidth}
        \includegraphics[width=\textwidth]{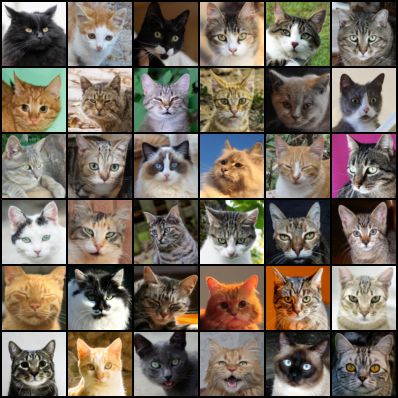}
    \end{subfigure}
    \hfill
    \begin{subfigure}[b]{.49\textwidth}
        \includegraphics[width=\textwidth]{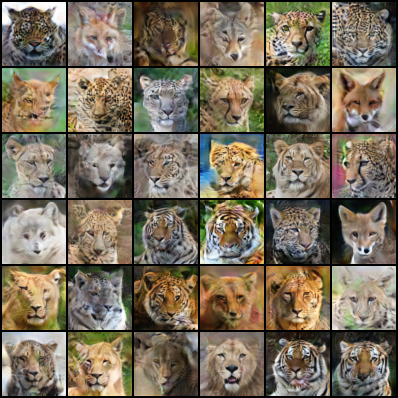}
    \end{subfigure}
    \caption{Unpaired Cat $\rightarrow$ Wild translation for 64 × 64 AFHQ image.}
\end{figure*}

\begin{figure*}[t] 
    \centering
    \includegraphics[width=\textwidth]{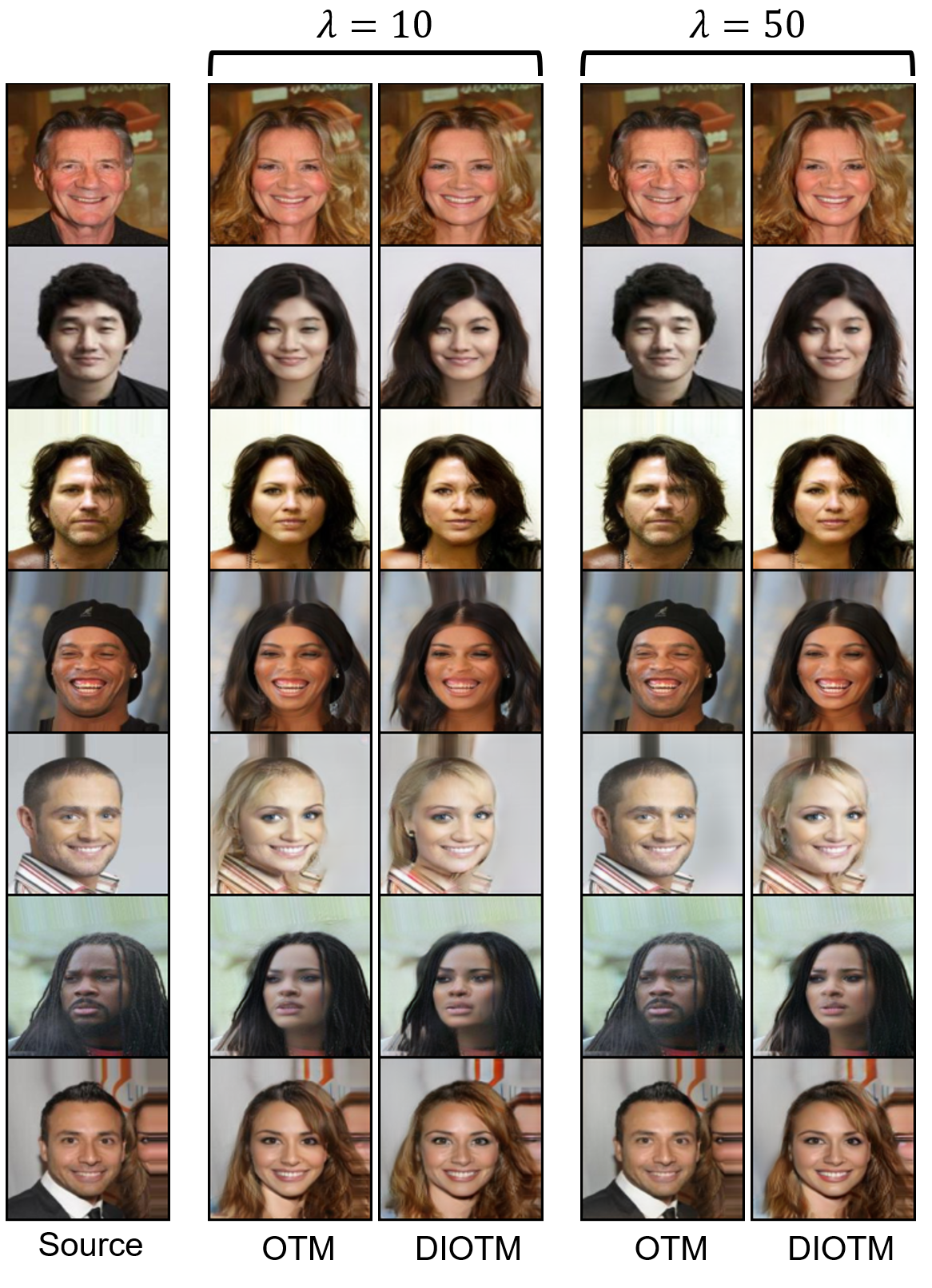}
    \vspace{-20pt}
    \caption{Qualitative comparison of unpaired Male $\rightarrow$ Female translation between OTM and DIOTM on 128 × 128 CelebA images.}
\end{figure*}

\begin{figure*}[t] 
    \centering
    \centering
    \includegraphics[width=\textwidth]{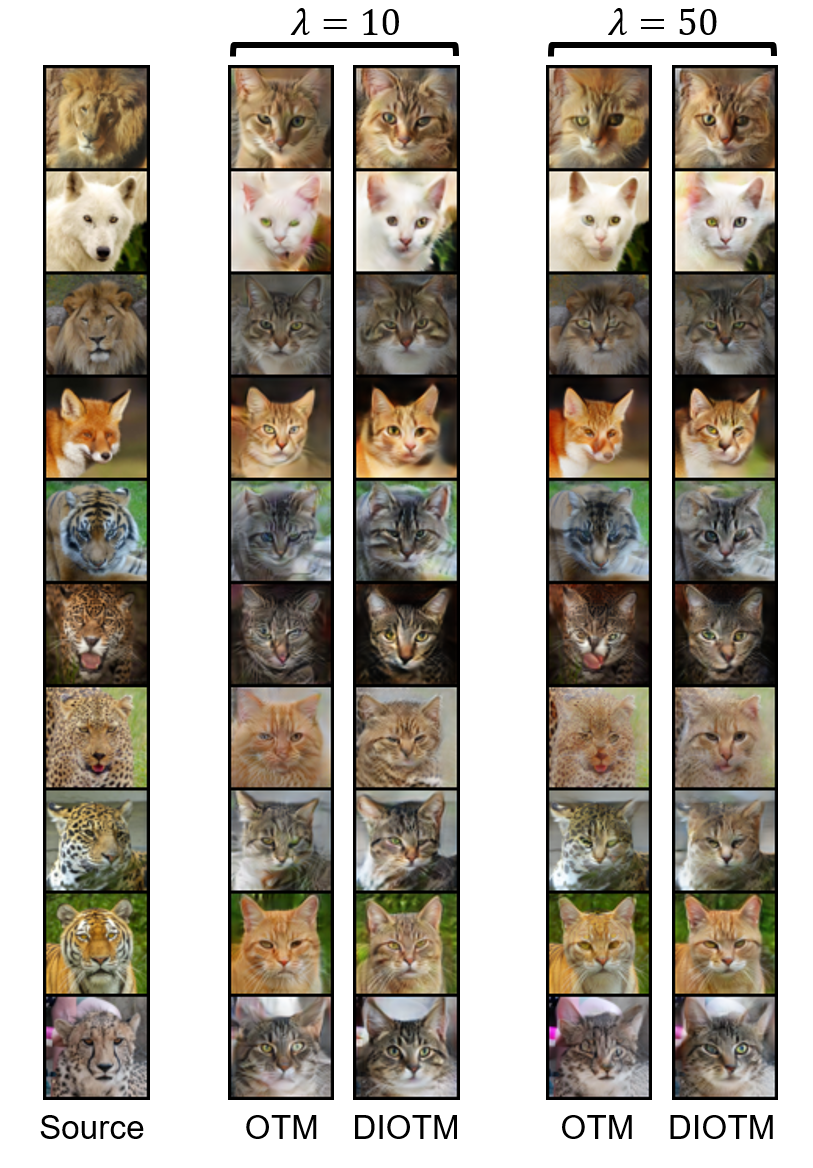} 
    \caption{Qualitative comparison of unpaired Wild $\rightarrow$ Cat translation between OTM and DIOTM for 64 × 64 AFHQ images.}
\end{figure*}

\clearpage

\section{Additional Quantitative Results} \label{appen:quantitative}

\subsection{Additional Quantitative Results for Image-to-Image Translation}
\paragraph{Evaluation of Image Alignment} 
To evaluate whether our transported image $T_\theta (x)$ preserves the properties of the original source image $x$, we compute the LPIPS \citep{lpips} metric between the source datapoint $x$ and its generation $T_\theta (x)$. The reported results are the mean LPIPS values calculated over a test dataset. As shown in Tab. \ref{tab:lpips}, our model demonstrates comparable results to other image-to-image (I2I) benchmarks.

\begin{table} [h]
    \centering
      \setlength\tabcolsep{5.2pt}
       \caption{Comparison of LPIPS ($\downarrow$) scores for perceptual similarity on Image-to-Image translation benchmarks.} 
       \label{tab:lpips}
       \vspace{-5pt}
    \centering
   \scalebox{0.9}{
        \begin{tabular}{ccc}
        \toprule
        Model &  Wild$\rightarrow$Cat (64x64) & Male$\rightarrow$Female (128x128) \\
        \midrule
        DSBM  & 0.59 & 0.25 \\
        OTM   & 0.47 & 0.21 \\
        DIOTM & 0.45 & 0.25 \\
        \bottomrule
        \end{tabular}
  }
\end{table}

\paragraph{FID Curves along Training Process}
As discussed in Sec. \ref{sec:ablation}, our model demonstrates robustness to the regularization hyperparameter (Fig. \ref{fig:abl_lambda}) and maintains stable training dynamics (Fig. \ref{fig:training_dynamics}). In this paragraph, we provide the FID curves throughout the training process for various values of hyperparameter $\lambda$. Fig. \ref{fig:fid_curve} shows that our model exhibits stable convergence across various hyperparameters $\lambda$. In contrast, OTM converges only under specific hyperparameter settings, i.e., $\lambda=10$ for Wild$\rightarrow$Cat (64 $\times$ 64) and $\lambda=50$ for Male$\rightarrow$Female ($128\times128$).

\begin{figure*}[h] 
    \centering
    \begin{subfigure}[b]{0.90\textwidth}
        \includegraphics[width=\textwidth]{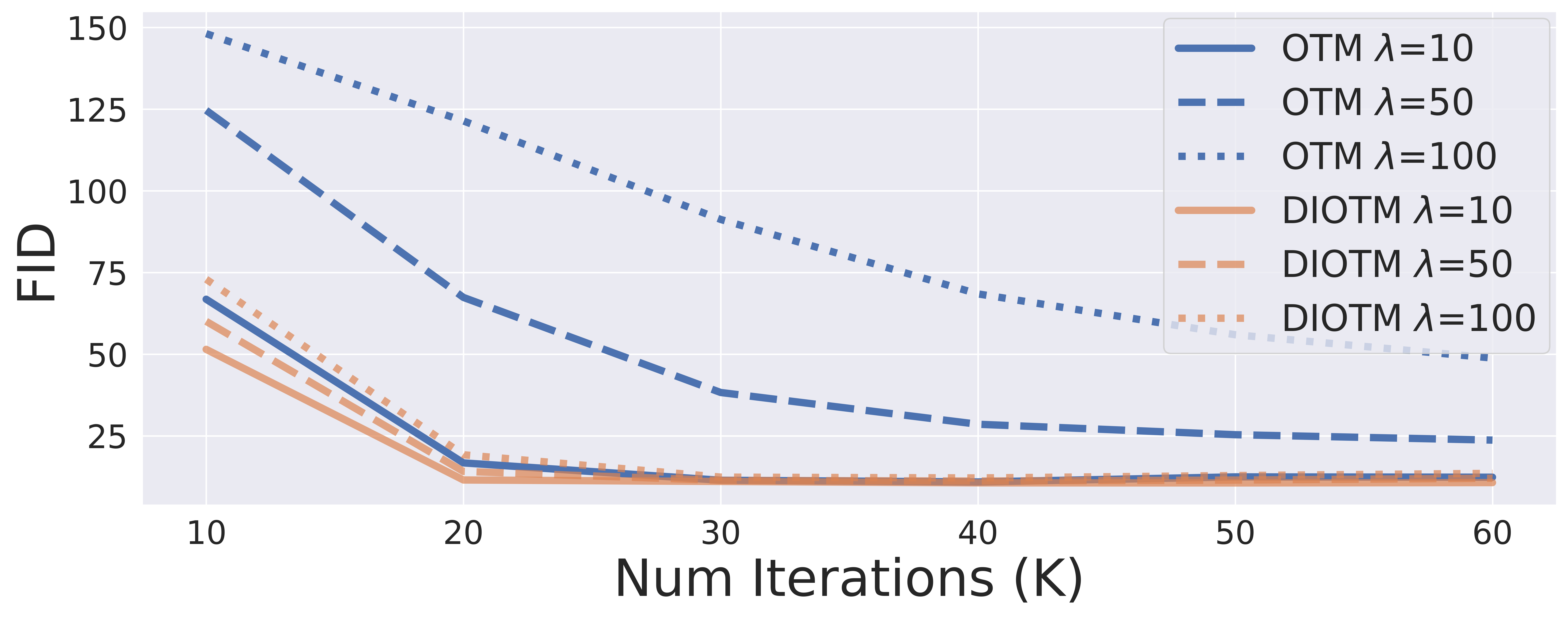}
        \caption{Visualization of FID curve on Wild$\rightarrow$Cat ($64\times64$).}
    \end{subfigure} \\
    \begin{subfigure}[b]{.90\textwidth}
        \includegraphics[width=\textwidth]{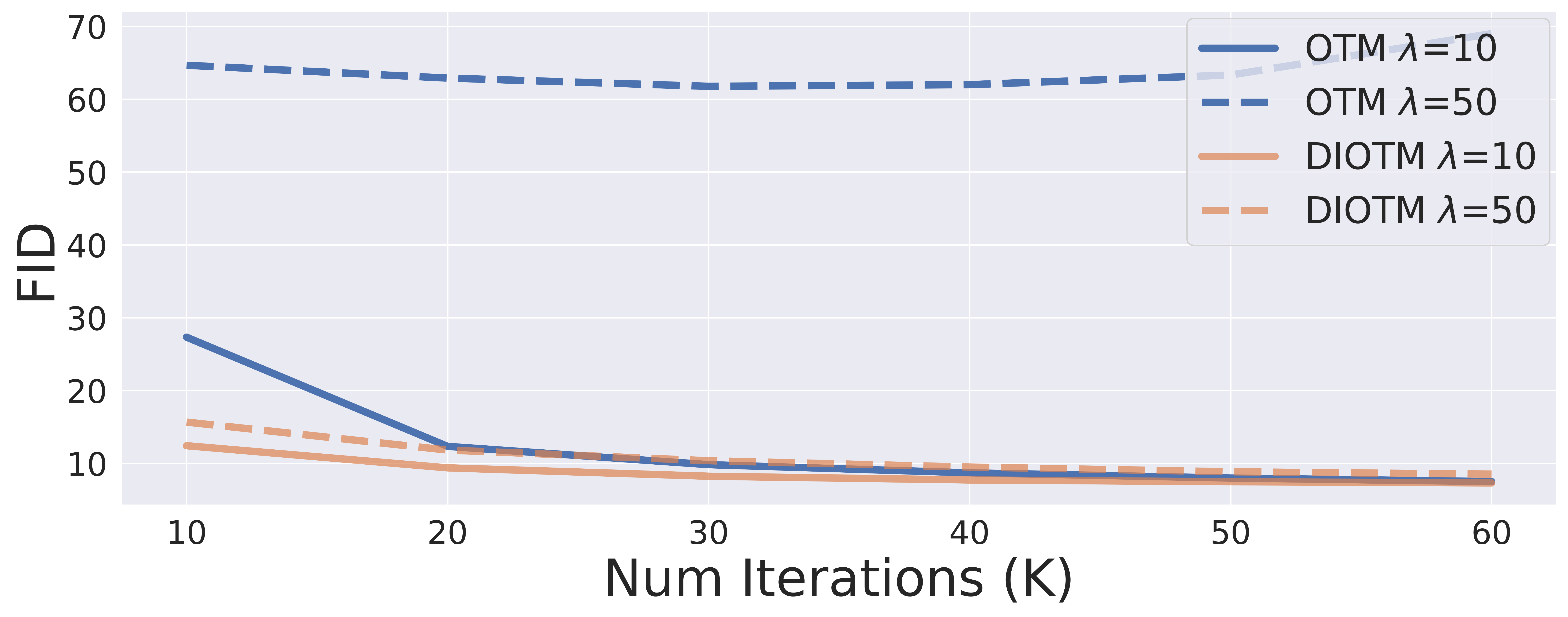}
        \caption{Visualization of FID curve on Male$\rightarrow$Female ($128\times128$).
        }
    \end{subfigure}
    \caption{Visualization of FID ($\downarrow$) curves for various hyperparameter $\lambda$ on Image-to-Image translation tasks.
    }
    \label{fig:fid_curve}
\end{figure*}

\paragraph{Ablation Study on Time Sampling Distribution}
We investigate alternative time samplers for line 2 of Algorithm \ref{alg:bi-otm}, beyond the uniform distribution $U[0,1]$. Specifically, we test two time sampling distributions:  $\text{Beta}(2,2)$ and $\text{Beta}(0.5,0.5)$. The $\text{Beta}(2,2)$ distribution samples intermediate time values ($t \approx 0.5$) more frequently, while $\text{Beta}(0.5,0.5)$ tends to sample near the boundaries of the interval ($t \approx 0, 1$) more often.
This ablation study on time sampling distribution is evaluated on the Wild$\rightarrow$Cat dataset. As shown in Table \ref{tab:time}, the uniform sampler achieves slightly better performance compared to the other two sampling methods. However, the differences in performance are relatively minor, indicating that the alternative sampling methods yield results comparable to the uniform sampler.

\begin{table} [h]
    \centering
      \setlength\tabcolsep{5.2pt}
       \caption{Ablation study on the time sampling distribution. FID scores are evaluated on the Wild$\rightarrow$Cat ($64\times64$) dataset.} 
       \label{tab:time}
       \vspace{-5pt}
    \centering
   \scalebox{0.9}{
        \begin{tabular}{cccc}
        \toprule
        Sampling Distribution &  Uniform &Beta(0.5,0.5) & Beta(2,2)   \\
        \midrule
        FID ($\downarrow$)  & \textbf{10.72} & 11.85 & 12.45 \\
        \bottomrule
        \end{tabular}
  }
\end{table}

\subsection{Additional Quantitative Results for 2D-Toy Experiments}
\paragraph{Cyclical Property}
Our DIOTM model learns the bidirectional transport maps $\overrightarrow{T_{\theta}}$, $\overleftarrow{T_{\theta}}$ simultaneously. Here, $\overrightarrow{T_{\theta}}$ and $\overleftarrow{T_{\theta}}$ denote the optimal transport maps for $\mu \rightarrow \nu$ and $\nu \rightarrow \mu$, respectively. Theoretically, these two optimal transport maps satisfy $T_2 \circ T_1 (x) = x$ and $T_1 \circ T_2 (y) = y$ almost surely under our assumptions. We refer to this property as a \textit{cyclical property}. To evaluate whether our model satisfies this cyclical property, we measured the MSE reconstruction error between $x$ and $T_2\circ T_1 (x)$, i.e. $\mathbb{E}_{x\sim \mu} [\Vert T_2 \circ T_1 (x) - x\Vert^2]$. For comparison, we trained two OTM models for $\mu \rightarrow \nu$ and $\nu \rightarrow \mu$ and measured the MSE reconstruction error using these models.

As shown in Tab. \ref{tab:cyclic}, our DIOTM achieves better reconstruction error on four out of six experiments. Our model shows a larger reconstruction error in the 25Gaussian-to-Gaussian-to-25Gaussian case ($25G \rightarrow G \rightarrow 25G$). However, this reconstruction is meaningful when the generating distribution errors are also considered, i.e., $\overrightarrow{T_{\theta}}_{\#} \mu \approx \nu$ and $\overleftarrow{T_{\theta}}_{\#} \nu \approx \mu$ (Table \ref{tab:toy}). We interpret this result as being due to the larger distribution error of OTM in the $G \rightarrow 25G$ case.

\begin{table} [h]
    \centering
      \setlength\tabcolsep{5.2pt}
       \caption{Evaluation of the cyclical property on synthetic datasets.} 
       \label{tab:cyclic}
       \vspace{-5pt}
    \centering
   \scalebox{0.9}{
        \begin{tabular}{ccccccc}
        \toprule
        Model &  8G$\rightarrow$G$\rightarrow$8G & G$\rightarrow$8G$\rightarrow$G & 25G$\rightarrow$G$\rightarrow$25G& G$\rightarrow$25G$\rightarrow$G& M$\rightarrow$S$\rightarrow$M & S$\rightarrow$M$\rightarrow$S \\
        \midrule
        OTM   & 1.06 & 0.040 & \textbf{4.13} & \textbf{0.56} & 4.01 & 1.12 \\
        DIOTM & \textbf{0.22} & \textbf{0.015} & 12.43 & 0.68 & \textbf{1.11} & \textbf{0.46} \\
        \bottomrule
        \end{tabular}
  }
\end{table}

\subsection{Comparison to OT Benchmarks on High Dimensional Data}

\begin{table}[h]
    \centering
    \setlength\tabcolsep{5.2pt}
       \caption{
       Continuous neural optimal transport benchmark \citep{OTbenchmark}. The transport maps are evaluated based on the $\mathcal{L}^2$-UVP (\%, $\downarrow$) metric and Cosine Similarities ($\uparrow$). The baseline scores are taken from \cite{OTbenchmark} and $\dagger$ indicates the results by ourselves.
       } 
       \label{tab:hd-exp}
       \vspace{-5pt}
    \centering
    \begin{tabular}{ccccc}
    \toprule
    Dimension &     \multicolumn{2}{c}{D=16}       &  \multicolumn{2}{c}{D=64}  \\
    \cmidrule(lr){2-3}  \cmidrule(lr){4-5} 
    Metric & $\mathcal{L}^2$-UVP ($\downarrow$) & cos ($\uparrow$) & $\mathcal{L}^2$-UVP ($\downarrow$) & cos ($\uparrow$) \\
    \midrule
    L & 41.6 & 0.73 & 63.9 & 0.75 \\
    QC &  47.2  & 0.70 &  75.2  & 0.70 \\
    \midrule
    MM &  2.2 &  0.99  &  3.2  &  0.99  \\
    MMv1 & 1.4 & 0.99  &   8.1  & 0.97 \\
    MMv2 & 3.1 &  0.99 &   10.1 & 0.96 \\
    MMv2:R & 7.7& 0.96  &   6.8  & 0.97 \\
    MM-B &  6.4   & 0.96 &  13.9 & 0.94 \\
    \midrule
    DIOTM$\dagger$ & 2.5 &  0.98 &  10.4 &  0.96\\
    \bottomrule
    \end{tabular}
\end{table}

\paragraph{High-dimensional Experiment}
In this section, we assess our model using a high-dimensional Gaussian mixture experiment, following the benchmark proposed in \cite{OTbenchmark}. Table \ref{tab:hd-exp} presents the results. Our model demonstrated reasonable performance, effectively learning the optimal transport map. Still, our model does not achieve state-of-the-art results. We believe this is because our primary goal was to achieve better scalability and stable estimation on a high-dimensional image dataset. To achieve this, our DIOTM does not impose specific structure constraints on two neural networks in our min-max optimization algorithm, i.e., the transport map and the discriminator, such as the input convex neural networks ICNN \citep{icnn}. Note that most of the baselines in Table \ref{tab:hd-exp} use ICNNs to parameterize their potential functions (discriminator). However, this approach sacrifices scalability, particularly for image datasets.

\paragraph{Experimental Settings}
Unless otherwise stated, we follow the hyperparameter and the implementation of \cite{OTbenchmark}.
We use the number of iterations of 100K, Adam optimizer of $(\beta_1, \beta_2) = (0,0.9)$ with the learning rate of $10^{-3}$.

For both forward and backward transport maps, we use 3-MLP layers with SiLU activation function. For the discriminator, we embed time variable $t$ by using sinusoidal embedding of dimension 128, and then pass it through the two MLP layers with SiLU activation. For the state variable $x$, we embed it through 2-MLP layers. Then, we add time and state embeddings and pass them through a 3-layer MLP with SiLU activation. We use the hidden dimension of 2048 for all MLP layers.

\end{document}